\pdfoutput=1 % MUST stay within the first 5 lines: tells arXiv to use pdfLaTeX.
\documentclass[11pt]{article}

\usepackage[font=libertinus, citestyle=numeric]{kurbanlab}

\DeclareAffiliation{hbku}{%
  College of Science and Engineering, Hamad Bin Khalifa University, Doha, Qatar}

\DeclareAffiliation{tamu}{%
  Department of Computer and Electrical Engineering,
  Texas A\&M University, College Station, TX, USA}

\DeclareAffiliation{iub}{%
  Luddy School of Informatics, Computing, and Engineering,
  Indiana University Bloomington, Bloomington, IN, USA}

\usepackage{tikz}
\usetikzlibrary{arrows.meta,positioning,calc,shapes.geometric}
\usepackage{multirow}

\definecolor{ink}{HTML}{1F2430}
\definecolor{panel}{HTML}{F5F6F8}
\definecolor{paneledge}{HTML}{C7CDD6}
\definecolor{cA}{HTML}{2C6E9C}
\definecolor{cB}{HTML}{B5563F}
\definecolor{cGreen}{HTML}{4A7C59}
\definecolor{cGold}{HTML}{B98A34}
\definecolor{cMute}{HTML}{6B7280}

\newcommand{\vgs}{\mathrm{VGS}}
\newcommand{\gedr}{\mathrm{GEDR}}
\newcommand{\ledr}{\mathrm{LEDR}}
\newcommand{\dof}{\mathrm{do}}
\newcommand{\ci}[1]{{\scriptsize[#1]}}

\newenvironment{supprop}[1]{\begin{note}[#1]}{\end{note}}

\title{A Low Grounding Score Is Not an Ungrounded Judge:
Identifying the Perceptibility Confound in Multimodal Oversight}
\RunningTitle{The perceptibility confound in multimodal oversight}

\Author{Rasul Khanbayov}{hbku}
\Author[corresponding=hkurban@hbku.edu.qa, orcid=0000-0003-3142-2866]{Hasan Kurban}{hbku}
\CodeURL{https://github.com/KurbanIntelligenceLab/groundJudge}
\begin{document}
\maketitle

\begin{abstract}
Model judges now supervise multimodal systems at scale, filtering training
data, selecting outputs, and supplying the reward that shapes multimodal
reasoning models. Trusting one means first checking that it uses its
evidence, and that check is itself worth scrutinizing, so we ask whether a
counterfactual probe of visual grounding measures what it claims to. The
probe edits the image so the ground truth flips, holds the reasoning trace
fixed, and asks whether the verdict follows. We formalize it as the Verdict
Grounding Score and show it cannot be read the way such scores are read. A
verdict responds only to an edit that reaches the judge's decision-relevant
reading, so the score is capped by how perceptible the edit is, and unless
editing makes the attribute easier to read, the error is one-sided: the
score can only make a judge look less grounded than it is. The practical
failure is therefore a false alarm, an auditor discarding a usable overseer.
Under assumptions we state, we show this missing quantity is not merely
bounded but identified from three quantities the same audit protocol
already collects, which makes the false-alarm rate directly measurable
rather than merely a concern. Auditing nine judges, we find the predicted
ordering holds strictly across our entire primary pool, and the typical
judge there acts on only about half of the edits whose attribute it can
otherwise resolve. Applying a conservative rejection threshold certifies
several cells as false alarms outright, the clearest being a judge that
detects the injected error essentially every time while still scoring as
if it had not used the image at all. The rule that follows is that an
image-side counterfactual score should never be reported alone: a detection
probe on the unedited image upper-bounds it, certifies its false alarms, and
costs nothing extra to run.
\end{abstract}

\begin{figure}[t]
\centering
\begin{subfigure}[t]{0.48\linewidth}
  \kilgraphics{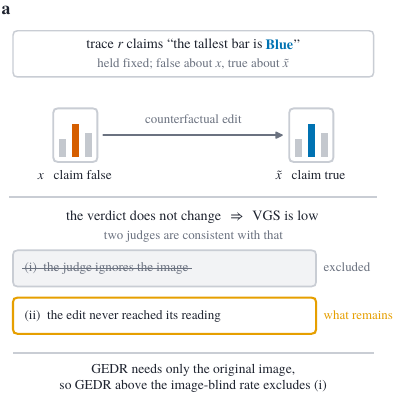}
  \caption{}
  \label{fig:teaser-a}
\end{subfigure}\hfill
\begin{subfigure}[t]{0.48\linewidth}
  \kilgraphics{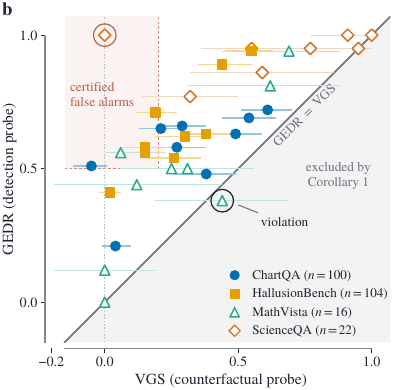}
  \caption{}
  \label{fig:teaser-b}
\end{subfigure}
\caption{\textbf{The reading confound and the ordering it predicts.}
\textbf{(a)}~A verdict that does not follow a truth-flipping edit is consistent
with two judges, one that ignores the image and one whose decision-relevant
reading the edit never reached. VGS cannot separate them; GEDR can, because a
$\gedr$ above the image-blind rate excludes the first
(\cref{prop:gap}).
\textbf{(b)}~All $36$ measured cells, plotted as $(\vgs,\gedr)$.
\Cref{cor:order} predicts every cell falls on or above the diagonal;
$35$ do, and all $18$ primary cells (ChartQA, HallusionBench) do so strictly.
Primary cells are filled, secondary open. Bars are $95\%$ paired bootstrap
intervals on VGS; the exception's interval spans the diagonal. Vertical distance
to the diagonal is the shortfall $\gedr-\vgs$, which \cref{thm:bound}
decomposes into missed edits and verdict instability. The banded region is where
\cref{cor:onesided} certifies a false alarm ($\vgs<0.20$ with
$\gedr>\tfrac12$); the ringed cell in it is Llama-3.2-Vision on ScienceQA, at
$\gedr=1.00$ and $\vgs=0.00$.}
\label{fig:teaser}
\end{figure}

%%%%%%%%%%%%%%%%%%%%%%%%%%%%%%%%%%%%%%%%%%%%%%%%%%%%%%%%%%%%%%%%%%%%%%%%%%%%%%%%
\section{Introduction}
\label{sec:intro}

Model judges have become the mechanism by which multimodal systems are
supervised at scale. Language models score and compare the outputs of other
models \citep{zheng2023judging,gu2024survey}, and the pattern has moved to the
multimodal setting \citep{chen2024mllmjudge,li2025vlrewardbench,yasunaga2025mmrewardbench}.
These judges filter training data, select outputs, and supply the reward that
shapes multimodal reasoning models \citep{li2025vlrewardbench,wang2025visualprm},
which places them where a learned reward model sits in instruction-tuned
pipelines \citep{ouyang2022instruct} and exposes them the same way, since
optimizing against an imperfect proxy degrades the quantity one actually wants
\citep{gao2023overopt,skalse2022hacking}. On the text side this is not
hypothetical. Process reward models behave as fluency detectors rather than
reasoning verifiers, and policies reach near-perfect reward at low ground-truth
accuracy \citep{rewardattack}. A judge that scores
fluent but visually ungrounded reasoning highly is the multimodal version, and
because the same judge gates what reaches users and what trains the next model,
the error does not stay contained. The response has been to measure whether a
judge uses the image. We ask the prior question of whether that measurement
supports the conclusion drawn from it, and find that it does not.

VLMs are known to under-use visual input. They inherit language priors and can
answer from text alone \citep{langbias}, and as judges they favor the more
informative of two answers even when it contradicts the image; supplying the
image raises their accuracy by under $5\%$ on VL-RewardBench
\citep{zou2026informativeness}. They carry biases that survive counter-prompting
\citep{fooling,viable}, and most of their errors are perceptual rather than
reasoning failures \citep{li2025vlrewardbench}. The gap has not closed with
scale: on Multimodal RewardBench~2 the strongest model reaches $75$ to $80\%$
against over $90\%$ for humans \citep{mmrb2}. Preference benchmarks cannot settle
why \citep{li2025vlrewardbench,yasunaga2025mmrewardbench}, because in curated
pairs the visually-correct answer is usually also the more fluent one, so
accuracy conflates \emph{using the image} with \emph{exploiting textual
correlates of the image}. What is missing is a measurement that isolates the
causal contribution of the visual evidence to the verdict.

We supply one, and then we show it is harder to read than it looks. Verdict
grounding is the causal effect of image content on a judge's decision,
identified by interventions that alter the image while holding the text
fixed.
\Cref{def:vgs} turns that into the Verdict Grounding Score. A verdict
can change under an edit only if the judge's decision-relevant reading of the
image changes, so VGS is capped by the rate at which that reading moves
(\cref{lem:perc}). A judge whose reading the edit never reaches scores low,
exactly like a judge that never looks, and VGS cannot separate them
(\cref{fig:teaser-a}). Whether the reading fails to move because the edit was
never perceived or because it was perceived and then outweighed by a language
prior is contested \citep{vlmbiased,arbitration,seeingsaying,countercount}. The
cap does not depend on settling that, which is part of why it is useful.

Under assumptions we state, the missing quantity stops being a bound and
becomes an estimand. Detecting an injected visual error on the \emph{unedited}
image requires resolving that image and nothing more, so GEDR measures the
judge's read accuracy on the original, while $\Delta_{\mathrm{flip}}$ measures
it after the edit. \Cref{thm:bound} turns that pair into an identity. The
edit-miss rate equals $(\gedr-\vgs)-\Delta_{\mathrm{null}}$, with no further
assumption and nothing left to bound. The ordering $\vgs\le\gedr$ drops
out as a corollary under one side condition, that editing does not make the
attribute easier to read, which we test instead of assuming. It is also not
definitional: a judge that responds to the truth-flipping edit but not to an
undetectable re-encoding, and that never separates the clean trace from the
corrupted one, has $\vgs=1$ and $\gedr=0$, so the ordering can fail.

It does not fail (\cref{fig:teaser-b}), and the error the identity exposes
runs one way: unless editing makes the attribute easier to read, the
counterfactual probe never overstates how well a judge reads the image it
will be deployed on, only understates it (\cref{cor:onesided}). So
the practical failure is not false assurance but a false alarm: an auditor
reads a low score and discards an overseer that reads the image. The numbers
behind both claims are in the contributions below and in
\cref{sec:audit}.

\paragraph{Contributions}
\textbf{(1)}~We prove that any counterfactual grounding score is capped by
how perceptible its edit is (\cref{lem:perc}), so a low score can mean
either an ungrounded judge or an edit that simply never reached it, and no
existing probe of this kind, including training signals built the same way,
can tell the two apart.
\textbf{(2)}~We resolve that ambiguity: the missing quantity is not just
bounded but exactly identified from measurements the audit already makes
(\cref{thm:bound}), which shows the resulting error runs in only one
direction and yields a concrete, checkable certificate for when a low score
is actually a false alarm (\cref{cor:onesided},
\cref{prop:gap}).
\textbf{(3)}~Auditing nine vision-language judges with GroundJudge, our
diagnostic pipeline, we confirm the predicted pattern holds essentially
everywhere we can test it rigorously, and we catch several judges being
penalized for a failure they do not actually have.
\textbf{(4)}~We turn this into a simple, nearly-free rule any auditing team
can adopt immediately: pair the counterfactual score with a detection probe
on the unedited image, which upper-bounds it and certifies its false alarms
at no extra cost. Recent judge-training work already builds probes of that
kind \citep{ppjd}.

%%%%%%%%%%%%%%%%%%%%%%%%%%%%%%%%%%%%%%%%%%%%%%%%%%%%%%%%%%%%%%%%%%%%%%%%%%%%%%%%
\section{Verdict Grounding and Its Confound}
\label{sec:theory}

A judging instance is a triple $(X,R,Y)$: an image $X$, a textual object $R$,
and the ground-truth verdict $Y=y(X,R)$. A judge is a possibly stochastic map
$V=J(X,R)$. We read \emph{grounding} as the controlled direct effect of $X$ on
$V$ holding $R$ fixed. Because the diagnostic constructs the intervened images
and queries the judge on them, the interventional laws are sampled directly and
the quantities below are identified by construction and not by adjustment
(\cref{sec:scm}). All proofs are in \cref{sec:proofs}.

Capital letters denote random quantities and lowercase letters their realized
values, so $X$ is the image and $x$ a particular one. Two image interventions
hold $R$ fixed. A \emph{truth-flipping} edit $\tilde X$ satisfies
$y(\tilde X,R)=\neg\,y(X,R)$, and a \emph{truth-preserving} control $X^{c}$
satisfies $y(X^{c},R)=y(X,R)$; we realize the latter as the content-preserving
re-encoding $X^{+}$. We write $u\succ v$ when verdict $u$ is strictly more
favorable than $v$. Both the clean and the corrupted trace flip truth value under $\tilde X$, so the
protocol must specify which is held fixed: it is the corrupted one, which the
edit makes true. The clean trace enters only through the detection probe.

\begin{definition}[Verdict Grounding Score]
\label{def:vgs}
\begin{align*}
\vgs(J)&=\underbrace{\Pr\big[J(\tilde X,R)\neq J(X,R)\big]}_{\Delta_{\mathrm{flip}}(J)}\\[-1pt]
&\quad-\underbrace{\Pr\big[J(X^{c},R)\neq J(X,R)\big]}_{\Delta_{\mathrm{null}}(J)},
\end{align*}
the rate at which the verdict responds to a truth-relevant image change, in
excess of its response to a truth-irrelevant one.
\end{definition}

Three routine facts fix the scale and are stated and proved in \cref{sec:support},
since none is the contribution here. $\vgs\in[-1,1]$; an image-blind judge
scores $0$ and a perfectly grounded one scores $1$ on binary or pairwise
verdicts, so the anchors are the ones the name suggests. $\vgs$ certifies
\emph{use} of truth-relevant image content and not correctness, and the
anti-grounded judge $J=\neg y$ also scores $1$ while always wrong: accuracy is
an orthogonal axis, and for every accuracy level there is a judge with $\vgs=0$,
so no leaderboard number bounds grounding. Finally, the accuracy a judge can buy
by looking is capped by its disagreement rate with an image-frozen copy of
itself, which is the one role we give the blank-ablation column of
\cref{stab:main}.

\subsection{A verdict cannot follow an edit its reading misses}
\label{sec:reading}

\begin{assumption}[Consistency judging]
\label{ass:med}
On the instances GroundJudge constructs, the trace's injected step refers to a
single ground-truth visual attribute, a map $v^{\star}$ from images into a
vocabulary $\mathcal A$. Write $a(r)$ for the value a trace $r$ asserts,
$a_T=v^{\star}(X)$ for the clean trace and $a_v\neq a_T$ for the injected false
value, so edit validity gives $v^{\star}(\tilde X)=a_v$. The judge forms a
\emph{decision-relevant reading} $\beta(x)\in\mathcal A$, and its verdict is
$J(x,r)=\varphi_r\big(\mathbf 1\{\beta(x)=a(r)\}\big)$ for a strictly
increasing $\varphi_r$: holding the trace fixed, the verdict depends on the
image only through whether the reading agrees with the trace's claim. Judge
randomness, if any, is drawn once per instance and reused across image
conditions, which the protocol enforces with a fixed decoding configuration.
\end{assumption}

$\beta$ is the value the verdict uses, not a claim about the visual encoder, and
it covers both accounts in the current literature: an edit never perceived and
an edit perceived then outweighed by a language prior both leave $\beta$
unchanged \citep{vlmbiased,arbitration,seeingsaying,countercount}. The assumption
is an idealization in two ways we do not hide. A judge integrating several
visual cues, or one whose score responds to $\beta$ beyond agreement with the
claim, falls outside it; and because $\varphi_r$ is indexed by the trace,
cross-trace comparisons need those traces text-matched, which is what
\cref{ass:match}(a) supplies.

\begin{lemma}[Reading bound]
\label{lem:perc}
Under \cref{ass:med}, let
$\mathrm{ERR}=\Pr[\beta(\tilde X)\neq\beta(X)]$ be the \emph{edit-responsive
reading rate}. Then $\Delta_{\mathrm{flip}}\le\mathrm{ERR}$, and therefore
$\vgs\le\mathrm{ERR}$.
\end{lemma}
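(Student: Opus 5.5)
The plan is an event inclusion followed by monotonicity of probability. Under \cref{ass:med}, the trace $R$ is held fixed across the two image conditions, and the judge's randomness is drawn once per instance and reused. We can therefore write $J(x,R)=\varphi_R\big(\mathbf 1\{\beta(x)=a(R)\}\big)$ with the same realized $\varphi_R$ (and the same realized $\beta$, if stochastic) for both $X$ and $\tilde X$. The core claim is the pointwise implication
\[
J(\tilde X,R)\neq J(X,R)\;\Longrightarrow\;\beta(\tilde X)\neq\beta(X).
\]
I would prove it by contraposition. If $\beta(\tilde X)=\beta(X)$, then the two indicators $\mathbf 1\{\beta(\tilde X)=a(R)\}$ and $\mathbf 1\{\beta(X)=a(R)\}$ coincide. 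Applying the same function $\varphi_R$ to equal arguments then gives equal verdicts. Strict monotonicity of $\varphi_r$ is not needed for this direction; only the fact that the verdict factors through the indicator matters.

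From the inclusion $\{J(\tilde X,R)\neq J(X,R)\}\subseteq\{\beta(\tilde X)\neq\beta(X)\}$, monotonicity of $\Pr$ gives $\Delta_{\mathrm{flip}}\le\mathrm{ERR}$. For the second claim, \cref{def:vgs} gives $\vgs=\Delta_{\mathrm{flip}}-\Delta_{\mathrm{null}}$, and $\Delta_{\mathrm{null}}\ge 0$ because it is a probability. Hence $\vgs\le\Delta_{\mathrm{flip}}\le\mathrm{ERR}$.

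The only delicate step is the coupling. The inclusion must hold on a common probability space in which the stochastic components of $J$ (and of $\beta$) are the same across the clean and edited conditions. Otherwise two independent draws could disagree even with identical readings. I would make this explicit by treating the per-instance randomness $\xi$ as an argument: write $J(x,r;\xi)$ and $\beta(x;\xi)$, and evaluate both image conditions at the same $\xi$, which is exactly the fixed-decoding clause of \cref{ass:med}. With that in place, the argument is pointwise in $(X,\tilde X,R,\xi)$, and the probabilities are taken over their joint law.
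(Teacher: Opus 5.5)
Your proposal is correct and matches the paper's proof: on $\{\beta(\tilde X)=\beta(X)\}$ the indicator, and hence $\varphi_R$ of it, is unchanged, which gives the event inclusion and $\Delta_{\mathrm{flip}}\le\mathrm{ERR}$, and then $\vgs\le\mathrm{ERR}$ follows from $\Delta_{\mathrm{null}}\ge 0$. Your explicit coupling through shared per-instance randomness usefully spells out what the fixed-decoding clause of \cref{ass:med} supplies, and you are right that strict monotonicity of $\varphi_r$ plays no role here.
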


The argument is immediate. If the reading does not move, the verdict is the
same function evaluated at the same arguments. Its consequence is less
immediate. Every counterfactual grounding score is capped by $\mathrm{ERR}$, so a
low score leaves two judges indistinguishable, one that ignores the image and
one whose reading the edit never reached. We call this the \emph{perceptibility
confound}. The cap applies to any
verdict-change rate under truth-flipping image edits, diagnostic or training
reward.

\subsection{The confound is identified, not merely bounded}
\label{sec:identified}

$\mathrm{ERR}$ is not observed. One further assumption makes the gap between
$\vgs$ and a quantity the protocol does collect equal to the missing piece.

\begin{assumption}[Matched, separable attributes]
\label{ass:match}
(a)~$a_v$ is drawn from the same text-conditional distribution as $a_T$, so the
clean and corrupted traces are exchangeable in the text channel and separating
them requires the image; we use this in the strong form that the two traces are
scored by a common map, $\varphi_{T}=\varphi_{T_v}$ almost surely, which is what
text-matching is for and what the reduction below actually needs.
(b)~$\Pr[\beta(X)=a_v]=0$: misreading the original does not land exactly on the
injected value.
\end{assumption}

Let $T$ be the clean trace and $T_v$ its visually corrupted version. Grounded-error
detection is evaluated on the unedited image,
$\gedr=\Pr[J(X,T)\succ J(X,T_v)]$. The protocol also injects a logical error and
records the corresponding rate LEDR as a reference condition; it plays no part
in what follows and is tabulated per cell in \cref{stab:main}.
Write $\pi_1=\Pr[\beta(X)=v^{\star}(X)]$ for the judge's read accuracy on the
original and $\pi_2=\Pr[\beta(\tilde X)=v^{\star}(\tilde X)]$ for its read
accuracy after the edit.

\begin{theorem}[Identification of the edit-miss rate]
\label{thm:bound}
Let $T_v$ be the trace held fixed in $\Delta_{\mathrm{flip}}$ and
$\Delta_{\mathrm{null}}$. Under \cref{ass:med}, with
\cref{ass:match}(a) for the first equality and \cref{ass:match}(b) for the
second,
$\gedr=\pi_1$ and $\Delta_{\mathrm{flip}}=\pi_2$, hence
\[
\pi_1-\pi_2\;=\;\big(\gedr-\vgs\big)-\Delta_{\mathrm{null}} .
\]
The rate at which the judge resolves the original attribute and fails on the
edited one is therefore point-identified by three quantities the protocol
already collects, with no bound and no further assumption.
\end{theorem}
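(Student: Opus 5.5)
The plan is to prove the two equalities $\gedr=\pi_1$ and $\Delta_{\mathrm{flip}}=\pi_2$ separately, each by pushing the verdict through the factorization $J(x,r)=\varphi_r(\mathbf 1\{\beta(x)=a(r)\}))$ of \cref{ass:med}. The displayed identity then follows by substituting into \cref{def:vgs}: $\vgs=\Delta_{\mathrm{flip}}-\Delta_{\mathrm{null}}=\pi_2-\Delta_{\mathrm{null}}$, so $\pi_1-\pi_2=\gedr-(\vgs+\Delta_{\mathrm{null}})$, which rearranges to the claim. All probabilities are over the instance together with the judge randomness. That randomness is drawn once per instance and shared across conditions, so pointwise arguments within an instance are legitimate.

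\emph{First equality.} Fix an instance and write $\varphi=\varphi_T=\varphi_{T_v}$, the common map supplied by \cref{ass:match}(a). On the unedited image, $J(X,T)=\varphi(\mathbf 1\{\beta(X)=a_T\})$ and $J(X,T_v)=\varphi(\mathbf 1\{\beta(X)=a_v\})$. Because $\varphi$ is strictly increasing on $\{0,1\}$, the event $J(X,T)\succ J(X,T_v)$ holds exactly when the first indicator is $1$ and the second is $0$.
\begin{itemize}
\item Since $a_v\neq a_T$, the two indicators cannot both be $1$.
\item Hence the event is exactly $\{\beta(X)=a_T\}$.
\item Since $a_T=v^\star(X)$, this is $\{\beta(X)=v^\star(X)\}$, whose probability is $\pi_1$.
\end{itemize}
This step uses only part (a).

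\emph{Second equality.} Hold $T_v$ fixed, so both verdicts use the same $\varphi_{T_v}$. Strict monotonicity gives $J(\tilde X,T_v)\neq J(X,T_v)$ if and only if $\mathbf 1\{\beta(\tilde X)=a_v\}\neq\mathbf 1\{\beta(X)=a_v\}$. By \cref{ass:match}(b), $\mathbf 1\{\beta(X)=a_v\}=0$ almost surely. So the verdict changes exactly on $\{\beta(\tilde X)=a_v\}$, up to a null set. Edit validity gives $v^\star(\tilde X)=a_v$, so this event is $\{\beta(\tilde X)=v^\star(\tilde X)\}$, whose probability is $\pi_2$.

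The main obstacle is bookkeeping rather than depth, at two points. First, the null set from part (b) must be carried correctly, and the strict order $\succ$ must be placed on the range of $\varphi$. Second, the strong form of (a) must be invoked exactly where the two traces are compared under a single map. Without it, $\varphi_T$ and $\varphi_{T_v}$ could order verdicts differently, and the event $J(X,T)\succ J(X,T_v)$ would not reduce to an indicator comparison. Consistent with the statement, no assumption beyond these enters: $\Delta_{\mathrm{null}}$ appears only as an observed quantity and is never modeled.
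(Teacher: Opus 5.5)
Your proof is correct and follows the paper's argument. The first equality is the paper's fact (i): the common map from \cref{ass:match}(a), together with $a_v\neq a_T$, collapses the detection event to $\{\beta(X)=v^{\star}(X)\}$. The second is its fact (ii): holding $T_v$ fixed, the verdict changes exactly when $\mathbf 1\{\beta(\cdot)=a_v\}$ changes, and \cref{ass:match}(b) kills that indicator on the unedited image. The only cosmetic difference is that the paper first writes $\Delta_{\mathrm{flip}}=q+\pi_2-2c$ in general and then uses (b) to set $q=c=0$, reusing the general form for the binary case in \cref{sec:nob}, whereas you apply (b) pointwise directly.
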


This is the paper's main theoretical claim, and what it buys is a change of
status. An unobserved confound becomes an estimand. Measuring $\mathrm{ERR}$
directly, which the read-attribute experiment below describes, then validates
the identity instead of supplying the only route to the quantity, and the
familiar ordering becomes an empirical prediction carrying a stated side
condition.

\begin{corollary}[Ordering, and what a violation certifies]
\label{cor:order}
$\vgs\le\gedr$ if and only if $\pi_2\le\pi_1+\Delta_{\mathrm{null}}$. In
particular the ordering holds whenever editing does not raise the judge's read
accuracy, and $\vgs>\gedr$ certifies
$\pi_2>\pi_1+\Delta_{\mathrm{null}}$, hence $\pi_2>\pi_1$: the edit made the
attribute strictly easier to resolve than it was.
\end{corollary}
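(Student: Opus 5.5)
The corollary is pure algebra on the identity of \cref{thm:bound}, so the plan is to start from
\[
\pi_1-\pi_2=(\gedr-\vgs)-\Delta_{\mathrm{null}}
\]
and rearrange it to
\[
\gedr-\vgs=\pi_1-\pi_2+\Delta_{\mathrm{null}}.
\]
Then $\vgs\le\gedr$ holds exactly when the right-hand side is nonnegative, that is, when $\pi_2\le\pi_1+\Delta_{\mathrm{null}}$. Because this is an equality between reals, both directions of the biconditional come out of one step. I will state explicitly that the corollary inherits the hypotheses of \cref{thm:bound}: \cref{ass:med} and both parts of \cref{ass:match}. The two equalities $\gedr=\pi_1$ and $\Delta_{\mathrm{flip}}=\pi_2$ are what license the rearrangement.

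For the "in particular" clause I will use $\Delta_{\mathrm{null}}\ge 0$, which holds because it is a probability (\cref{def:vgs}). If editing does not raise read accuracy, then $\pi_2\le\pi_1\le\pi_1+\Delta_{\mathrm{null}}$, and the forward direction gives the ordering.

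For the certificate I will take the contrapositive of the biconditional. From $\vgs>\gedr$ we get $\pi_2>\pi_1+\Delta_{\mathrm{null}}$, and nonnegativity of $\Delta_{\mathrm{null}}$ then gives $\pi_2>\pi_1$. The interpretation follows from the definitions of $\pi_1,\pi_2$ as read accuracies on the original and edited images: the edited attribute was resolved strictly more often than the original.

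There is no real obstacle. The one point needing care is to note that the conclusion concerns population quantities under the stated assumptions. An empirical violation certifies $\pi_2>\pi_1$ only to the extent that those assumptions hold and the estimates are precise; a violation could otherwise reflect failure of the assumptions themselves. I will add this as a remark, not as part of the proof.
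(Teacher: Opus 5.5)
Your proposal is correct and follows the paper's proof: both rearrange the identity of the theorem to $\gedr-\vgs=(\pi_1-\pi_2)+\Delta_{\mathrm{null}}$ under its full hypotheses, read off the biconditional, and use $\Delta_{\mathrm{null}}\ge 0$ twice, once for the sufficiency clause and once to pass from $\pi_2>\pi_1+\Delta_{\mathrm{null}}$ to $\pi_2>\pi_1$. Strictly, the certificate step negates both sides of the biconditional rather than taking a contrapositive, but that does not affect the argument.
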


We assume nothing about the sign of $\pi_2-\pi_1$; the audit tests it. Nor is
the ordering definitional. Let $J(x,r)=\mathbf 1\{x=\tilde x\}$ depend on
the image alone. It never separates $T$ from $T_v$, so $\gedr=0$; it flips under
the edit, so $\Delta_{\mathrm{flip}}=1$; and if the re-encoding is one it cannot
detect, $\Delta_{\mathrm{null}}=0$ and $\vgs=1>0=\gedr$. The condition on
$\Delta_{\mathrm{null}}$ is not decorative: a judge that also flips under the
re-encoding has $\vgs=0=\gedr$ and ties instead (\cref{sec:notdef}).

\Cref{thm:bound} also fixes the direction in which the probe can err,
which is what makes the confound actionable rather than merely present.

\begin{corollary}[One-sided error, and a false-alarm certificate]
\label{cor:onesided}
Under the hypotheses of \cref{thm:bound},
$\vgs=\pi_2-\Delta_{\mathrm{null}}$. Whenever editing does not raise read
accuracy ($\pi_2\le\pi_1$), $\vgs\le\gedr-\Delta_{\mathrm{null}}\le\gedr$: the
counterfactual probe never overstates the judge's read accuracy on the unedited
image, and understates it by exactly
$(\pi_1-\pi_2)+\Delta_{\mathrm{null}}$. Consequently, wherever $\gedr>b$ with
$b$ as in \cref{prop:gap}, no value of $\vgs$ licenses the inference
that the judge ignores the image: the entire shortfall is accounted for by
missed edits and oversensitivity.
\end{corollary}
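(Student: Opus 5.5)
The plan is to read \cref{cor:onesided} off \cref{thm:bound} by rearrangement and then combine it with \cref{prop:gap}; no new probabilistic argument is needed. \textbf{Step 1.} By \cref{def:vgs}, $\vgs=\Delta_{\mathrm{flip}}-\Delta_{\mathrm{null}}$. \Cref{thm:bound} gives $\Delta_{\mathrm{flip}}=\pi_2$ under \cref{ass:med} and \cref{ass:match}(b), so $\vgs=\pi_2-\Delta_{\mathrm{null}}$ holds directly. \textbf{Step 2.} The same theorem gives $\gedr=\pi_1$ under \cref{ass:match}(a). Subtracting,
\[
\gedr-\vgs=(\pi_1-\pi_2)+\Delta_{\mathrm{null}}.
\]
This is the claimed exact understatement, and it is just the identity of \cref{thm:bound} rearranged. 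If $\pi_2\le\pi_1$, then $\vgs=\pi_2-\Delta_{\mathrm{null}}\le\pi_1-\Delta_{\mathrm{null}}=\gedr-\Delta_{\mathrm{null}}$. Since $\Delta_{\mathrm{null}}$ is a probability and hence nonnegative, this is at most $\gedr$. This also recovers the sufficient direction of \cref{cor:order}.

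\textbf{Step 3} is the certificate. \Cref{prop:gap} is cited in the caption as saying that a $\gedr$ above the image-blind rate $b$ excludes an image-blind judge. I will use it in that form: an image-blind judge, meaning $J(x,r)$ constant in $x$ or with $\beta$ constant, has $\gedr\le b$. So $\gedr>b$ rules out image-blindness, whatever value $\vgs$ takes. It remains to say where the shortfall goes. Step 2 splits $\gedr-\vgs$ exactly into $\pi_1-\pi_2$ and $\Delta_{\mathrm{null}}$. The first term is, by \cref{thm:bound}, the rate at which the original attribute is resolved and the edited one missed. The second is oversensitivity to a truth-preserving change. No residual term can be attributed to ignoring the image, and that is the content of ``the entire shortfall is accounted for.''

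\textbf{Main obstacle.} The step most likely to need care is this last one. First, I must confirm that the $b$ in \cref{prop:gap} really is an upper bound on $\gedr$ for image-blind judges without using $\pi_2\le\pi_1$. Otherwise the final sentence would silently inherit that side condition, and it should be stated as holding unconditionally given $\gedr>b$. Second, I must make sure that interpreting $\pi_1-\pi_2$ as ``missed edits'' is legitimate when the difference is negative. In that case the decomposition still holds as an identity, but the first summand is no longer an edit-miss rate. I will therefore state the attribution under $\pi_2\le\pi_1$ and note that the image-blindness exclusion holds regardless.
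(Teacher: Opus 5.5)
Your proposal is correct and matches the paper's proof step for step. You read off $\vgs=\pi_2-\Delta_{\mathrm{null}}$ and $\gedr=\pi_1$ from \cref{thm:bound}, subtract to get the exact deficit $(\pi_1-\pi_2)+\Delta_{\mathrm{null}}$, use $\Delta_{\mathrm{null}}\ge 0$ for the inequalities, and apply \cref{prop:gap} to the same cell for the certificate. Your first worry resolves as you hoped, because the paper's proof of \cref{prop:gap} uses only \cref{ass:match}(a) and never $\pi_2\le\pi_1$. Restricting the ``missed edits'' reading to the case $\pi_2\le\pi_1$ is a slight sharpening of the paper, which attributes the shortfall through the identity alone.
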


The two deflating terms differ in kind. $\pi_1-\pi_2$ is the missed edit;
$\Delta_{\mathrm{null}}$ is oversensitivity, a documented reward-model failure
\citep{discreterm} and the one term measured directly, so a grounded but
oversensitive judge is deflated by a known amount. Neither can push $\vgs$ above
$\gedr$.
\Cref{ass:match}(b) fails by construction when $|\mathcal A|=2$;
\cref{sec:nob} gives the identity without it, the binary case being
$\Pr[\beta(X)=a_v]=1-\pi_1$. Two of our four datasets are binary and we report
them separately.

\begin{proposition}[Which explanation applies]
\label{prop:gap}
Let $b$ be the detection rate of an image-blind judge on the same
construction. Then $\gedr>b$ certifies that the judge is not image-blind, and
in any such cell \cref{thm:bound} attributes the shortfall to missed
edits and verdict instability rather than to image-blindness. Under
\cref{ass:match}(a), $b=\tfrac{1-p_{\mathrm{tie}}}{2}\le\tfrac12$
where $p_{\mathrm{tie}}$ is the rate of scoring $T$ and $T_v$ equally.
\end{proposition}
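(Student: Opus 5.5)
The proposition has two parts: a certification claim and a computation of $b$. The certification claim follows from the definition of $b$. The value of $b$ follows from a symmetry argument under \cref{ass:match}(a).

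\textbf{Certification.} By definition, $b$ is the detection rate $\Pr[J_0(X,T)\succ J_0(X,T_v)]$ attained by an image-blind judge $J_0$ on the same construction. Under \cref{ass:match}(a) this rate is the same for every image-blind judge, which the second part establishes. So any judge whose $\gedr$ differs from $b$, in particular one with $\gedr>b$, cannot be image-blind; read contrapositively, image-blindness forces $\gedr=b$. For such a cell, \cref{thm:bound} and \cref{cor:onesided} apply as stated. The shortfall $\gedr-\vgs$ then equals $(\pi_1-\pi_2)+\Delta_{\mathrm{null}}$, which is missed edits plus verdict instability. Nothing in this decomposition is attributed to blindness, since blindness has already been excluded. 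This part is bookkeeping on results already proved.

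\textbf{Computing $b$.} Fix an image-blind judge, so $J_0(x,r)=g(r)$ for some possibly stochastic $g$, with randomness shared across conditions as in \cref{ass:med}. Under \cref{ass:match}(a), $a_v$ and $a_T$ are drawn from the same text-conditional distribution, so the traces are exchangeable in the text channel: $(T,T_v)\overset{d}{=}(T_v,T)$. Since $g$ ignores the image, it follows that
\[
\Pr[g(T)\succ g(T_v)]=\Pr[g(T_v)\succ g(T)].
\]
The three events $g(T)\succ g(T_v)$, $g(T)\prec g(T_v)$ and the tie $g(T)=g(T_v)$ partition the probability space, because verdicts are totally ordered by favorability. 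Hence $2b+p_{\mathrm{tie}}=1$, which gives $b=\tfrac{1-p_{\mathrm{tie}}}{2}\le\tfrac12$.

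\textbf{Main obstacle.} The delicate step is justifying the swap symmetry $(T,T_v)\overset{d}{=}(T_v,T)$ as a joint statement. \Cref{ass:match}(a) speaks of the marginal text-conditional law of $a_v$ matching that of $a_T$. I would read it in the strong form the assumption itself invokes: the two traces are text-matched and scored by a common map, so any image-free functional treats them symmetrically. A second point needs care. If judge randomness is drawn once and reused for both traces, the symmetry must hold jointly with that randomness. I would handle this by conditioning on the random seed, applying exchangeability conditionally, and then integrating over the seed.
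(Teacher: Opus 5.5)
Your computation of $b$ is the paper's own argument. An image-blind judge sees only the text, \cref{ass:match}(a) makes $\Pr[J(T)\succ J(T_v)]=\Pr[J(T_v)\succ J(T)]$, and the three-way partition gives $2b+p_{\mathrm{tie}}=1$. Conditioning on the shared seed is a harmless tightening of a step the paper takes for granted.

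The gap is in the certification. You rest it on the claim that under \cref{ass:match}(a) the image-blind detection rate ``is the same for every image-blind judge, which the second part establishes.'' The second part establishes the opposite: $b=\tfrac{1-p_{\mathrm{tie}}}{2}$ depends on the judge's own tie rate. A constant image-blind judge ties on every instance and has $b=0$. An image-blind judge that scores each trace by its asserted value under a fixed ordering of $\mathcal A$ never ties, because $a_v\neq a_T$, and by exchangeability it has $b=\tfrac12$. So your step ``image-blindness forces $\gedr=b$'' fails whenever $b$ is the rate of some image-blind reference judge other than the one being audited. A low-tie image-blind judge clears the $b$ of a high-tie one, and $\gedr>b$ then certifies nothing.

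The paper uses only the uniform half of your computation. Every image-blind judge has detection rate at most $\tfrac12$, so by contraposition $\gedr>\tfrac12$ excludes image-blindness. The $b$ in the statement has to be read as that worst case over image-blind judges. This is consistent with the audit substituting the largest observed text-only rate ($0.64$) rather than any single judge's rate.

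Alternatively, you can read $b$ as the audited judge's own image-ablated rate $b(J)$, as in \cref{sec:decomp}. Then the certificate needs no symmetry at all. An image-blind $J$ returns the same verdicts under $x$ and $x_\varnothing$, since its randomness is shared across conditions. Hence $\gedr(J)=b(J)$ identically for image-blind $J$, and $\gedr(J)>b(J)$ excludes it.

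With either repair, the rest of your argument goes through as written: the shortfall is attributed to $(\pi_1-\pi_2)+\Delta_{\mathrm{null}}$ through \cref{thm:bound}.
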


The audit becomes a diagnosis at no extra cost. One further consequence bears on
how a judge should be vetted before it is trusted to supervise, and it follows
from \cref{thm:bound} rather than needing an argument of its own. A clean trace
scored against one carrying an injected
visual error, both on the \emph{unedited} image, succeeds at rate
$\pi_1=\gedr$, which contains no term in $\pi_2$; $\vgs=\pi_2-\Delta_{\mathrm{null}}$
instead depends on the judge's reading of $\tilde X$, an image the audit
manufactures. To the extent that an overseer is deployed on unedited images,
$\gedr$ is a property of inputs it will encounter and $\vgs$ of inputs it will
not, so even a deconfounded $\vgs$ would describe the wrong image
(\cref{sec:support}, stated for comparisons of the audited form; we have not
characterized arbitrary deployment pairs). The plug-in estimators are
unbiased, but the sample-size bound (\cref{sec:support}) certifies only a loose
$\varepsilon$ on the two small datasets, which is why they are reported as a
secondary pool below, and why the ordering evidence there rests on signs and
a pooled instance-level permutation test rather than on any single cell.

%%%%%%%%%%%%%%%%%%%%%%%%%%%%%%%%%%%%%%%%%%%%%%%%%%%%%%%%%%%%%%%%%%%%%%%%%%%%%%%%
\section{GroundJudge: A Label-Free Diagnostic}
\label{sec:method}

GroundJudge turns \cref{def:vgs} into a runnable protocol without
human annotation. For each instance it (i)~generates a clean chain-of-thought
trace and verifies it against the gold answer; (ii)~injects one visual error,
altering a single perceptual claim, and one logical error, altering a single
inference step, keeping the surrounding text fluent so the fault is detectable
only from the image; and (iii)~constructs four image conditions: a
content-preserving re-encoding $x^{+}$, a blank
$x_\varnothing$, a mismatched $x'$, and the counterfactual $\tilde x$ edited so
the injected claim becomes true (illustrated in \cref{sfig:interventions}). Each
judge scores every trace under every
condition with a fixed prompt and scoring rule, so the only thing varying
within an instance is the image.

\paragraph{Artifact control}
$X^{+}$ nets out re-encoding noise but not sensitivity to the inpainting
operator, so a judge reacting to edit \emph{artifacts} rather than edit
\emph{content} inflates VGS. \Cref{sec:artifact} defines the artifact-matched
control $X^{\dagger}$, which nets out both, and recomputes $\vgs$ with
$X^{c}=X^{\dagger}$ for all nine judges, Gemini-3.5-Flash included: the mean
absolute change is $0.025$ over the eight original judges and $0.040$ for
Gemini-3.5-Flash specifically, so artifact sensitivity is small next to the
content effect $\vgs$ measures for every judge in the pool. Every number
reported below uses $X^{c}=X^{+}$.

%%%%%%%%%%%%%%%%%%%%%%%%%%%%%%%%%%%%%%%%%%%%%%%%%%%%%%%%%%%%%%%%%%%%%%%%%%%%%%%%
\section{Experiments}
\label{sec:experiments}

We audit each judge along three axes: \textbf{grounding}, $\vgs$ from verdict
changes under $\tilde x$ against $x^{+}$; \textbf{error detection}, GEDR beside
LEDR on the same instances; and \textbf{image sensitivity}, the mean verdict
change when the image is blanked or swapped.

\paragraph{Judges and data}
Five open-weight judges (Qwen2.5-VL-7B-Instruct, InternVL3-8B,
LLaVA-OneVision-7B, Llama-3.2-11B-Vision-Instruct, and the trained process
reward model VisualPRM-8B-v1.1 \citep{wang2025visualprm}) run locally on one
NVIDIA A100, one at a time; four proprietary judges (GPT-4o, Gemini-2.5-Flash,
Claude-Sonnet-4.5, Gemini-3.5-Flash) are reached through a single
OpenAI-compatible gateway, which changes neither prompts nor scoring.
Instances: ChartQA $n{=}100$ \citep{masry2022chartqa}, HallusionBench $n{=}104$
\citep{guan2024hallusionbench}, MathVista $n{=}16$ (FigureQA-sourced subset
\citep{lu2024mathvista,figureqa}), ScienceQA $n{=}22$ \citep{lu2022scienceqa}. The
two small pools are what survives filtering to constructions with a
pixel-verifiable, exactly flippable ground truth, and their width shows in the
intervals. We also measured an in-house pointwise preference accuracy per judge
and do not
report it: it does not reproduce the protocol behind published
preference-benchmark numbers, and no such number would bound grounding
anyway, since accuracy and grounding are independent axes (\cref{sec:support};
details in \cref{sec:refacc}). Decoding is greedy for the five open-weight judges, so
\cref{ass:med}'s randomness-sharing premise holds trivially for them;
the API judges set neither temperature nor seed, so it is unverified there. No
condition was run twice, so
test-retest reliability is not measured. Exact model identifiers, judge-call
counts and the reproducibility gaps (no pinned checkpoint revisions, no logged
API access dates, no recoverable inference hours) are in \cref{sec:judges}
and~\cref{sec:checklist}.

\paragraph{What this pool is and is not}
Eight of the nine judges are a generation behind the frontier: recent
benchmarks already evaluate substantially newer models and place judges of
this vintage well below the current state of the art (\cref{sec:related}).
\Cref{thm:bound} does not depend on judge vintage
but the audit does, and a current judge is the sharpest test: if better
perception raises $\pi_2$ toward $\pi_1$, the shortfall should shrink toward
$\Delta_{\mathrm{null}}$. One current frontier API model, Gemini-3.5-Flash, was
run through the full pipeline on all four datasets. Its mean $\vgs=0.70$ is the
highest in the pool, ahead of VisualPRM-8B at $0.57$, and its shortfall is the
second smallest, as predicted. Per-instance verdicts for Gemini-3.5-Flash and
the $\pi_1,\pi_2$ read-attribute condition are included in the Code and Data
Supplement alongside the other eight judges'. Qwen3-VL, InternVL3.5, MJ1 and
PPJD remain
unaudited; the pool still holds one trained judge and no reasoning-mode
variants.

\paragraph{Construction gate}
An oracle with ground-truth access reached $\vgs=\gedr=1.0$ everywhere,
validating the metric code and not the construction, since that oracle equals
$y$ by definition. The construction is validated by human spot-check ($n{=}50$
each on ChartQA and
HallusionBench, $100\%$ edit-valid) and, for the small pools, by automated
ground-truth reconstruction on every kept instance. GQA \citep{hudson2019gqa}
reached $70\%$ after seven rounds of fixes, failed the
pre-registered $95\%$ gate, and is excluded as a negative result. Two rounds of
ScienceQA label leakage were
found and fixed, the second cutting $n$ from $48$ to $22$ (\cref{sec:sciqa},
\cref{sec:gqa}).

\Cref{tab:main} summarises the full audit: for every judge, GEDR
exceeds counterfactual grounding on every dataset, the gap between them is
non-negative under \cref{cor:order}'s side condition in all but one
cell, and the identified edit-miss rate that \cref{thm:bound} isolates
is shown directly rather than left as a description.

\begin{table}[t]
\centering\footnotesize
\setlength{\tabcolsep}{1.9pt}\renewcommand{\arraystretch}{1.02}
\begin{tabular}{@{}l l r l r r c@{}}
\toprule
& \multicolumn{2}{c}{\kilth{VGS}} & \multicolumn{2}{c}{\kilth{GEDR}} & \kilth{Gap} & \kilth{Miss}\\
\cmidrule(lr){2-3}\cmidrule(lr){4-5}\cmidrule(lr){6-6}\cmidrule(l){7-7}
\kilth{Judge} & \multicolumn{1}{c}{range} & mean & \multicolumn{1}{c}{range} & mean & mean & mean\\
\midrule
\addlinespace[1.5pt]\multicolumn{7}{@{}l}{\itshape Open-weight judges}\\[1pt]
\quad LLaVA-OneVision & \tikz[baseline=-0.55ex]{\draw[cMute!22,line width=3.2pt] (0,0)--(0.98,0);\draw[cA!45,line width=3.2pt] (0.232,0)--(0.610,0);\draw[cMute!55,line width=0.4pt] (0.103,-0.09)--(0.103,0.09);\fill[cA] (0.374,0) circle (1.35pt);} & $+$0.32 & \tikz[baseline=-0.55ex]{\draw[cMute!22,line width=3.2pt] (0,0)--(0.98,0);\draw[cGreen!45,line width=3.2pt] (0.533,0)--(0.842,0);\draw[cMute!55,line width=0.4pt] (0.103,-0.09)--(0.103,0.09);\fill[cGreen] (0.655,0) circle (1.35pt);} & 0.64 & 0.33 & 0.32\\
\quad Qwen2.5-VL$^{\dagger}$ & \tikz[baseline=-0.55ex]{\draw[cMute!22,line width=3.2pt] (0,0)--(0.98,0);\draw[cA!45,line width=3.2pt] (0.327,0)--(0.765,0);\draw[cMute!55,line width=0.4pt] (0.103,-0.09)--(0.103,0.09);\fill[cA] (0.524,0) circle (1.35pt);} & $+$0.49 & \tikz[baseline=-0.55ex]{\draw[cMute!22,line width=3.2pt] (0,0)--(0.98,0);\draw[cGreen!45,line width=3.2pt] (0.430,0)--(0.920,0);\draw[cMute!55,line width=0.4pt] (0.103,-0.09)--(0.103,0.09);\fill[cGreen] (0.640,0) circle (1.35pt);} & 0.63 & 0.14 & 0.13\\
\quad InternVL3 & \tikz[baseline=-0.55ex]{\draw[cMute!22,line width=3.2pt] (0,0)--(0.98,0);\draw[cA!45,line width=3.2pt] (0.155,0)--(0.378,0);\draw[cMute!55,line width=0.4pt] (0.103,-0.09)--(0.103,0.09);\fill[cA] (0.279,0) circle (1.35pt);} & $+$0.21 & \tikz[baseline=-0.55ex]{\draw[cMute!22,line width=3.2pt] (0,0)--(0.98,0);\draw[cGreen!45,line width=3.2pt] (0.585,0)--(0.765,0);\draw[cMute!55,line width=0.4pt] (0.103,-0.09)--(0.103,0.09);\fill[cGreen] (0.655,0) circle (1.35pt);} & 0.64 & 0.44 & 0.40\\
\quad Llama-3.2-Vision & \tikz[baseline=-0.55ex]{\draw[cMute!22,line width=3.2pt] (0,0)--(0.98,0);\draw[cA!45,line width=3.2pt] (0.060,0)--(0.120,0);\draw[cMute!55,line width=0.4pt] (0.103,-0.09)--(0.103,0.09);\fill[cA] (0.097,0) circle (1.35pt);} & $-$0.01 & \tikz[baseline=-0.55ex]{\draw[cMute!22,line width=3.2pt] (0,0)--(0.98,0);\draw[cGreen!45,line width=3.2pt] (0.103,0)--(0.963,0);\draw[cMute!55,line width=0.4pt] (0.103,-0.09)--(0.103,0.09);\fill[cGreen] (0.516,0) circle (1.35pt);} & 0.48 & 0.49 & 0.46\\
\addlinespace[1.5pt]\multicolumn{7}{@{}l}{\itshape Proprietary judges}\\[1pt]
\quad GPT-4o & \tikz[baseline=-0.55ex]{\draw[cMute!22,line width=3.2pt] (0,0)--(0.98,0);\draw[cA!45,line width=3.2pt] (0.206,0)--(0.920,0);\draw[cMute!55,line width=0.4pt] (0.103,-0.09)--(0.103,0.09);\fill[cA] (0.473,0) circle (1.35pt);} & $+$0.43 & \tikz[baseline=-0.55ex]{\draw[cMute!22,line width=3.2pt] (0,0)--(0.98,0);\draw[cGreen!45,line width=3.2pt] (0.481,0)--(0.920,0);\draw[cMute!55,line width=0.4pt] (0.103,-0.09)--(0.103,0.09);\fill[cGreen] (0.662,0) circle (1.35pt);} & 0.65 & 0.22 & 0.13\\
\quad Gemini-2.5-Flash & \tikz[baseline=-0.55ex]{\draw[cMute!22,line width=3.2pt] (0,0)--(0.98,0);\draw[cA!45,line width=3.2pt] (0.318,0)--(0.885,0);\draw[cMute!55,line width=0.4pt] (0.103,-0.09)--(0.103,0.09);\fill[cA] (0.499,0) circle (1.35pt);} & $+$0.46 & \tikz[baseline=-0.55ex]{\draw[cMute!22,line width=3.2pt] (0,0)--(0.98,0);\draw[cGreen!45,line width=3.2pt] (0.516,0)--(0.963,0);\draw[cMute!55,line width=0.4pt] (0.103,-0.09)--(0.103,0.09);\fill[cGreen] (0.662,0) circle (1.35pt);} & 0.65 & 0.19 & 0.14\\
\quad Claude-Sonnet-4.5 & \tikz[baseline=-0.55ex]{\draw[cMute!22,line width=3.2pt] (0,0)--(0.98,0);\draw[cA!45,line width=3.2pt] (0.103,0)--(0.576,0);\draw[cMute!55,line width=0.4pt] (0.103,-0.09)--(0.103,0.09);\fill[cA] (0.271,0) circle (1.35pt);} & $+$0.20 & \tikz[baseline=-0.55ex]{\draw[cMute!22,line width=3.2pt] (0,0)--(0.98,0);\draw[cGreen!45,line width=3.2pt] (0.206,0)--(0.920,0);\draw[cMute!55,line width=0.4pt] (0.103,-0.09)--(0.103,0.09);\fill[cGreen] (0.531,0) circle (1.35pt);} & 0.50 & 0.30 & 0.26\\
\addlinespace[1.5pt]\multicolumn{7}{@{}l}{\itshape Trained process reward model}\\[1pt]
\quad VisualPRM-8B & \tikz[baseline=-0.55ex]{\draw[cMute!22,line width=3.2pt] (0,0)--(0.98,0);\draw[cA!45,line width=3.2pt] (0.481,0)--(0.696,0);\draw[cMute!55,line width=0.4pt] (0.103,-0.09)--(0.103,0.09);\fill[cA] (0.589,0) circle (1.35pt);} & $+$0.57 & \tikz[baseline=-0.55ex]{\draw[cMute!22,line width=3.2pt] (0,0)--(0.98,0);\draw[cGreen!45,line width=3.2pt] (0.696,0)--(0.911,0);\draw[cMute!55,line width=0.4pt] (0.103,-0.09)--(0.103,0.09);\fill[cGreen] (0.830,0) circle (1.35pt);} & 0.85 & 0.28 & 0.25\\
\addlinespace[1.5pt]\multicolumn{7}{@{}l}{\itshape Frontier judge audited 2026-08-05 (B10)}\\[1pt]
\quad Gemini-3.5-Flash & \tikz[baseline=-0.55ex]{\draw[cMute!22,line width=3.2pt] (0,0)--(0.98,0);\draw[cA!45,line width=3.2pt] (0.576,0)--(0.963,0);\draw[cMute!55,line width=0.4pt] (0.103,-0.09)--(0.103,0.09);\fill[cA] (0.701,0) circle (1.35pt);} & $+$0.70 & \tikz[baseline=-0.55ex]{\draw[cMute!22,line width=3.2pt] (0,0)--(0.98,0);\draw[cGreen!45,line width=3.2pt] (0.722,0)--(0.963,0);\draw[cMute!55,line width=0.4pt] (0.103,-0.09)--(0.103,0.09);\fill[cGreen] (0.849,0) circle (1.35pt);} & 0.87 & 0.17 & 0.14\\
\bottomrule
\end{tabular}
\caption{\textbf{One row per judge, averaged over ChartQA ($n{=}100$),
HallusionBench ($n{=}104$), MathVista ($n{=}16$), and ScienceQA ($n{=}22$).}
All $36$ cells are in \cref{fig:teaser-b} and \cref{stab:main}. Bars span
each judge's range on a common axis from $-0.12$ to $1.02$, ticked at the
image-blind anchor $0$, with a dot at the mean. Columns are rounded
independently.}
\label{tab:main}
\tabnote{$^{\dagger}$One cell of this judge violates the ordering
(MathVista, $n{=}16$, VGS interval $[0.19,0.69]$ contains GEDR).}
\end{table}

\subsection{The ordering predicted by \texorpdfstring{\cref{cor:order}}{Corollary 1}}
\label{sec:ordering}

We separate the audit before reading it. ChartQA ($n{=}100$) and HallusionBench
($n{=}104$) are the datasets whose edits passed human validation at $100\%$ and
where the estimation bound certifies a useful $\varepsilon$, so
their $18$ cells are primary; MathVista ($n{=}16$) and ScienceQA ($n{=}22$) are
reported in full as secondary and carry nothing in the headline.

On the primary pool the ordering holds \emph{strictly} in all $18$ cells, with
no tie and no violation and a mean shortfall of $0.32$. Every exception in the
audit is secondary: the single violation (Qwen2.5-VL on MathVista, $\vgs=0.44$
against $\gedr=0.38$, $n{=}16$, VGS interval $[0.19,0.69]$ containing GEDR) and
all three exact ties, which are saturated or degenerate. Overall the shortfall
is non-negative in $35$ of $36$ cells.

A $2{,}000$-resample paired bootstrap on the per-instance shortfall separates
the point estimates from the precision behind them. Of the $36$ intervals $25$
exclude $0$; the eleven that span it are almost all secondary, seven on
MathVista and three on ScienceQA against one on ChartQA (Gemini-2.5-Flash,
$[-0.04,+0.24]$). So $17$ of $18$ primary cells exclude zero: the primary
restriction costs half the cells and buys an effect unanimous in sign and nearly
unanimous in precision.

Cells share instances within a dataset and every judge sees the same instances,
so no sign test on tabulated cells is fully valid; three descriptive tests at
increasingly conservative groupings all agree in direction but are each at
best an upper bound on the evidence (\cref{sec:stats}). A valid test on the
per-instance indicators, the unit the design actually supports, pools every
paired instance across judges and datasets and gives mean shortfall $0.265$
with permutation $p<5\times10^{-5}$ ($20{,}000$ permutations, none as extreme
as observed).
\Cref{cor:order} gives no guarantee on the two binary-attribute
datasets, where \cref{ass:match}(b) fails; the ordering holds in $17$
of those $18$ cells, reported as an observation.

\subsection{What the shortfall says}
\label{sec:audit}

\paragraph{Judges are not ignoring the image}
$\gedr>\tfrac12$ in $27$ of $36$ cells and in $15$ of the $18$ primary ones,
reaching $1.00$ three times, so \cref{prop:gap} excludes an
image-blind judge over most of the table. The threshold must be the measured
one, so a text-only reference was run for the five open-weight judges on all
four datasets (\cref{sec:sciqa}): not one of the $20$ resulting cells' bootstrap
intervals excludes $0.5$, so no dataset is dropped on that basis and none is
cleared on it either. Substituting the largest observed text-only rate,
$b=0.64$, every ScienceQA cell still certifies. In the certifying cells
\cref{thm:bound} splits the shortfall into missed edits and verdict
instability, which changes how the audit reads: these judges are not evaluating
from the text, and their verdicts fail to act on a truth-flipping edit whose
attribute they can otherwise resolve.

A paired permutation test of $\vgs\neq0$ per cell ($10{,}000$ resamples) gives
$26$ of $36$ cells with an uncorrected $p<0.05$ and $23$ surviving
Holm--Bonferroni across all $36$. All three that flip sit in the secondary pool,
with small $n$ the common thread.

\paragraph{How much of the edit gets through}
$\Delta_{\mathrm{null}}$ is tabulated per cell in \cref{stab:main}, so
\cref{thm:bound} can be instantiated rather than left as a
decomposition. On the primary pool the edit-miss rate $\pi_1-\pi_2$ has median
$0.31$
(quartiles $0.17$ and $0.40$), and the identified share of readable attributes
whose edit the judge also registers, $\pi_2/\pi_1$, has median $0.54$: the
typical judge acts on barely half the edits it can otherwise read. Over all $36$
cells those medians are $0.22$ and $0.60$, the secondary pool being milder. Per
judge the Miss column of \cref{tab:main} runs from $0.13$
(Qwen2.5-VL) to $0.46$ (Llama-3.2-Vision).
Two cells audit-wide have a negative miss rate, both inside their intervals of
zero.

\paragraph{What the confound costs an auditor}
\Cref{cor:onesided} says the probe errs in one direction, so the
failure to look for is a false alarm. Take an auditor who reads $\vgs<0.20$ as
``this judge does not use the image'' and apply \cref{prop:gap} to
the same cell. Six of $36$ cells, four of them primary, have $\vgs<0.20$ with
$\gedr>\tfrac12$, so the verdict is certified wrong: a judge detecting an
injected visual error on the unedited image at better than the image-blind rate
is reading that image. In three of the six the VGS interval contains $0$, making
the judge indistinguishable from image-blind on the counterfactual probe and
demonstrably not image-blind on the detection probe. The clearest is
Llama-3.2-Vision on ScienceQA, which detects the injected error on \emph{every}
instance ($\gedr=1.00$) with a perfectly stable verdict under the
truth-preserving control ($\Delta_{\mathrm{null}}=0.00$) and still scores
$\vgs=0.00$. Read alone, the probe would retire that judge. The two probes also
order the nine judges differently (\cref{sec:blank}).

\begin{kilkey}
An image-side counterfactual grounding score should never be reported alone.
A detection probe on the unedited image upper-bounds it, certifies its false
alarms, and costs nothing extra to run.
\end{kilkey}

Every cell holds instance, judge, prompt and trace fixed and varies only the
image condition, so the shortfall cannot come from a text-side difference; the
construction gate's oracle result above is the positive control.
$\Delta_{\mathrm{null}}$ is the negative control,
the flip rate under a truth-preserving re-encoding, which should move nothing and
nearly does not: mean $0.038$, exactly zero in $17$ of $36$ cells, against a mean
$\Delta_{\mathrm{flip}}$ of $0.410$, a $10.8\times$ separation. Judges are not
simply unstable under any pixel change. Edit \emph{size} remains uncontrolled;
the magnitude sweep was built to manipulate it and came back null, so the
mechanism is identified
but not shown dose-responsive.

\paragraph{Blank-image ablation measures image presence, not grounding}
The four proprietary judges change their verdict on $0.77$ of instances when the
image is blanked but only $0.57$ when it is replaced by a mismatched one; the
four open-weight generalist judges reverse this ($0.24$ against $0.41$).
Reacting strongly to \emph{no} image and weakly to a \emph{wrong} one responds
to the presence of visual input, not its content, and mean blank-ablation
$\Delta$ correlates with mean VGS at only $r=0.54$ on $n=9$ (\cref{sec:blank}).
\Cref{sec:support} gives that column its one legitimate role.

\paragraph{What would close this, and what we ran}
Every result so far treats the counterfactual edit as binary: it either
flips the ground truth or it does not, never varying by how much the
attribute itself changed. That is what the base protocol needs and no
more, but it leaves open whether the shortfall tracks how large the edit
is, which needs a construction where edit size is itself a parameter.
Most of GroundJudge's constructions do not expose one; ChartQA's does,
since it edits a bar's numeric value, so it admits a controlled
magnitude sweep the others do not. We ran the
magnitude sweep along with the read-attribute experiment, a direct
measurement of $\pi_1$ and $\pi_2$. The magnitude sweep's prediction was a
monotone rise with $|{\rm magnitude}|$ while GEDR stays flat; it does not hold
for three of four judges (monotone permutation $p=0.80$ to $0.98$), and the
fourth reaches significance on a curve nearly flat throughout
(\cref{sfig:magnitude}). We report this as a null for the mechanism claim the
sweep was meant to strengthen, on the same standard applied to the GQA and
downstream nulls. Per-instance output for both experiments is included in the
Code and Data Supplement.
A reasoning-mode and explicit-look condition \citep{seeingsaying} remains unrun
(\cref{sec:close}).

\paragraph{Does the confound reach the reward signal?}
Judges select as well as score. Under an independent candidate pool
(\cref{sec:pool}) every judge picks a visually-wrong candidate at a nonzero rate
on ChartQA, from $0.11$ (Claude-Sonnet-4.5) to $0.71$ (Llama-3.2-Vision), and
neither probe orders that rate: VisualPRM-8B has the pool's second-highest VGS
and a worse rate ($0.42$) than Claude-Sonnet-4.5, and GEDR's correlation with it
reverses sign between datasets ($p=0.12$ and $0.086$, \cref{sec:downstream}).
Eight
judges cannot settle this, and \cref{cor:order} explains why no
ordering should be expected: VGS is one verdict's response to one edit, a
selection rate is an argmax over a pool. We report the null.

%%%%%%%%%%%%%%%%%%%%%%%%%%%%%%%%%%%%%%%%%%%%%%%%%%%%%%%%%%%%%%%%%%%%%%%%%%%%%%%%
\section{Related work}
\label{sec:related}

\emph{Verifying the overseer.} A learned judge stands where a learned reward
model stands, and optimizing against an imperfect stand-in degrades the target
\citep{gao2023overopt,skalse2022hacking,ouyang2022instruct}; weak judges already
serve as stand-ins for human oversight, multimodally included
\citep{kenton2024oversight}. Three recent lines audit the stand-in itself.
\citet{rewardattack} perturb the \emph{response} and find process reward models
invariant to style yet inconsistent at detecting corrupted reasoning.
\citet{discreterm} separate a reward model's discriminative ability from its
oversensitivity, which $\vgs$ nets out by subtracting
$\Delta_{\mathrm{null}}$ and \cref{thm:bound} identifies what remains.
\citet{rewardauditor} give hypothesis tests for reward-model reliability under
perturbation. All three work on the text side; we intervene on the image, and
our contribution is not another intervention but the observation that this one
cannot be read without a second measurement.

\emph{Judge benchmarks and bias.} MMRB2 puts frontier judges $10$ to $25$ points
below humans with image editing among its subtasks \citep{mmrb2}; VL-RewardBench
and Multimodal RewardBench precede it
\citep{li2025vlrewardbench,yasunaga2025mmrewardbench}. Judges over-weight
informativeness \citep{zou2026informativeness}, carry visual biases surviving
counter-prompting \citep{fooling}, and give task-dependent reliability
\citep{vlmconformal,viable}. Closest on measurement,
\citet{zou2026informativeness} contrast judge accuracy with and without the
image, which needs gold labels, differences two accuracies instead of tracking a
per-instance change, and by our own ablation conflates reacting to an absent
image with reacting to image content.

\emph{Perturbation-based judge diagnosis.} Nearest to this work, \citet{ppjd}
identify the same failure, that multimodal judges anchor on response text rather
than their own perception, and perturb the \emph{response} to build training
data for a GRPO-trained judge. Their perturbation is on the response; ours is on
the image with the response held fixed. \Cref{cor:order} relates the two
in their favour: any probe measuring detection of an injected error on the
\emph{unedited} image upper-bounds the image-side counterfactual probe, because
it does not require the edit to move the reading, and by
the scoping result above it is the one of the two defined on the image the
overseer receives. We do not claim the bound for their reported numbers, which
are not the estimator we analyse. MJ1 trains with a counterfactual consistency
reward over \emph{swapped} image inputs to remove position bias
\citep{mj1grounded}, which is not a truth-flipping edit, so \cref{lem:perc}
constrains the class of training signals built on such edits and not that reward
as constructed.

\emph{Why the reading fails to move.} \citet{vlmbiased} report frontier models
failing to detect a counterfactual edit even side by side. Against a purely
perceptual account, \citet{arbitration} find grounding failures arise from
arbitration rather than perceptual blindness, \citet{seeingsaying} attribute
counterfactual failures to reasoning inertia an explicit instruction undoes, and
\citet{countercount} find models attending the relevant region and still
underusing it. We state the bound over the decision-relevant reading so it
survives either account.

\emph{Answer-side probing.} Masking and text-only probes \citep{illposed},
natural-direct-effect estimation \citep{treble}, counterfactual counting
\citep{countercount} and counterfactual grounding in medicine \citep{medcf} ask
whether a VLM grounds its \emph{answers}; we move the lens to the verdict, where
the error enters selection and reward. Process reward work builds step verifiers
on annotated step errors \citep{wang2025visualprm,vrprm,evpv}; faithful-reasoning
training acts on the \emph{generator} \citep{faithfulgrpo,debiasedselfjudge}.

%%%%%%%%%%%%%%%%%%%%%%%%%%%%%%%%%%%%%%%%%%%%%%%%%%%%%%%%%%%%%%%%%%%%%%%%%%%%%%%%
\section{Discussion}
\label{sec:discussion}

An overseer cannot be verified with an instrument that fails silently. A
counterfactual grounding score is a lower bound on grounding by an amount nobody
has been measuring, and by \cref{cor:onesided} the shortfall runs one
way, so the instrument's characteristic error is retiring a judge that reads the
image; six cells here are certified instances. Low scores of this form should be
read as ``the verdict did not follow this edit,'' never as ``the judge ignores
the image.'' The correction is cheap: GEDR needs no preference labels and no
extra image construction, and \cref{prop:gap} turns it into a
certificate. We therefore propose a reporting rule that costs an
auditing team one extra column. Score the clean trace against one carrying an
injected visual error on the \emph{unedited} image, report that rate beside the
counterfactual score, and treat any cell where the counterfactual score is low
while the detection rate clears the image-blind threshold as a false alarm
rather than a finding. Grounding varies more across tasks than across judges
here, so a judge validated on one benchmark carries no guarantee elsewhere. We
audit the \emph{instrument}, not the training loop, and do not show the confound
degrades a real oversight outcome.

\paragraph{Limitations}
Injected errors are synthetic and the primary pool is
two datasets. GEDR and LEDR carry no per-cell intervals, though the shortfall
does, and two of four datasets are binary-attribute, where
\cref{cor:order} gives no guarantee. The read-attribute experiment's
direct measurement of $\pi_1$
and $\pi_2$ exposes two violations of \cref{ass:med}, both in the
secondary pool and both in scoring rather than perception: on ScienceQA a
pointwise verdict threshold hides score movement that $\gedr$ and $\pi_1$ both
see, and on MathVista three of four judges score a clean ``yes'' trace far above
an equally correct ``no'' trace, making $\varphi_r$ depend on the trace's literal
content (\cref{sec:readattr}, with the per-judge numbers). We report both rather
than dropping the cells. Finally,
the ordering was observed before \cref{cor:order} was derived to explain
it, so the audit itself is retrodiction. We ran the genuine fix once: a judge
never touched by any prior step (Qwen3-VL-8B-Instruct, chosen for its size class
before it was queried and reported in \cref{sec:heldout} regardless of outcome) was
withheld as a true test set, with the ordering predicted in writing before the
first call. It holds on all four datasets, strictly, including both primary-pool
datasets ($\vgs=0.04,\gedr=0.21$ on ChartQA; $\vgs=0.31,\gedr=0.45$ on
HallusionBench), and the edit-miss rate this identity implies is non-negative on
every cell. One held-out judge does not retire the retrodiction label for the
audit as a whole, but it is no longer untested.

\section{Conclusion}
\label{sec:conclusion}

A counterfactual probe of visual grounding is capped by how perceptible its edit
is, and the missing quantity is identified rather than merely bounded by three
measurements the same audit already collects. The error runs one way, so the
characteristic failure is a false alarm: an auditor discarding an overseer that
reads the image. Pairing the counterfactual score with a detection probe on the
unedited image costs nothing and turns the shortfall into a diagnosis.

%%%%%%%%%%%%%%%%%%%%%%%%%%%%%%%%%%%%%%%%%%%%%%%%%%%%%%%%%%%%%%%%%%%%%%%%%%%%%%%%
%% BACK MATTER
%%%%%%%%%%%%%%%%%%%%%%%%%%%%%%%%%%%%%%%%%%%%%%%%%%%%%%%%%%%%%%%%%%%%%%%%%%%%%%%%
\begin{kilrepro}
Pipeline, edits, prompts, metric code, per-instance verdicts and licences are in
the Code and Data Supplement. Every interval and $p$-value here is recomputed
from those shipped verdicts by \texttt{run\_missing.py} rather than stored
separately, which we confirmed by re-running it against the released files.
Reproducibility gaps we could not close (unpinned checkpoint revisions, unlogged
API access dates and inference hours) are stated in full in \cref{sec:judges}
and~\cref{sec:checklist} rather than papered over.
\end{kilrepro}

% NOTE: the source paper is anonymized and carries no author-contributions
% statement. Left commented rather than invented.
% \begin{contributions}
% \end{contributions}

% NOTE: no acknowledgments section exists in the source paper.
% \begin{acknowledgments}
% \end{acknowledgments}

% NOTE: no funding statement exists in the source paper.
% \begin{funding}
% \end{funding}

\begin{availability}
% This section SURVIVES blind mode (most venues require it), so the URL is
% anonymized explicitly -- \ifbool{KIL@blind}{...}{...} does it automatically.
Pipeline, edits, prompts, metric code, per-instance verdicts and licences are
available at
\ifbool{KIL@blind}
  {\url{https://anonymous.4open.science/r/XXXXXX}}
  {\url{https://github.com/KurbanIntelligenceLab/groundJudge}}.
The study relies only on publicly available
datasets and models. Dataset licences are itemized in \cref{sec:checklist}.
\end{availability}

\begin{conflicts}
\emph{Ethics, impact, and reproducibility.} A team that reads a low score
and concludes its judge is unusable discards a usable overseer, which our six
certified false alarms make concrete; one that reads a moderate score and calls
the judge adequately grounded has bounded only the edits it did act on. Both
errors are invisible without the second measurement, and both propagate because
the same judge gates what reaches users and what trains the next model.
The diagnostic adds no generative capability and no dual-use concern.

\emph{Ethics and LLM-use disclosure.} Large language models were used
to assist with writing and editing this paper. The authors reviewed all
content and take full responsibility for the work. The study relies only
on publicly available datasets and models and raises no additional ethical
concerns.
\end{conflicts}

%%%%%%%%%%%%%%%%%%%%%%%%%%%%%%%%%%%%%%%%%%%%%%%%%%%%%%%%%%%%%%%%%%%%%%%%%%%%%%%%
\bibliography{references}

@article{zheng2023judging,
  title={Judging llm-as-a-judge with mt-bench and chatbot arena},
  author={Zheng, Lianmin and Chiang, Wei-Lin and Sheng, Ying and Zhuang, Siyuan and Wu, Zhanghao and Zhuang, Yonghao and Lin, Zi and Li, Zhuohan and Li, Dacheng and Xing, Eric and others},
  journal={Advances in neural information processing systems},
  volume={36},
  pages={46595--46623},
  year={2023}
}

@article{gu2024survey,
  title={A survey on llm-as-a-judge},
  author={Gu, Jiawei and Jiang, Xuhui and Shi, Zhichao and Tan, Hexiang and Zhai, Xuehao and Xu, Chengjin and Li, Wei and Shen, Yinghan and Ma, Shengjie and Liu, Honghao and others},
  journal={The Innovation},
  volume={7},
  number={6},
  year={2026},
  publisher={Elsevier}
}

@article{chen2024mllmjudge,
  title={Mllm-as-a-judge: Assessing multimodal llm-as-a-judge with vision-language benchmark},
  author={Chen, Dongping and Chen, Ruoxi and Zhang, Shilin and Liu, Yinuo and Wang, Yaochen and Zhou, Huichi and Zhang, Qihui and Wan, Yao and Zhou, Pan and Sun, Lichao},
  journal={arXiv preprint arXiv:2402.04788},
  year={2024}
}

@inproceedings{li2025vlrewardbench,
  title={Vl-rewardbench: A challenging benchmark for vision-language generative reward models},
  author={Li, Lei and Wei, Yuancheng and Xie, Zhihui and Yang, Xuqing and Song, Yifan and Wang, Peiyi and An, Chenxin and Liu, Tianyu and Li, Sujian and Lin, Bill Yuchen and others},
  booktitle={2025 IEEE/CVF Conference on Computer Vision and Pattern Recognition (CVPR)},
  pages={24657--24668},
  year={2025},
  organization={IEEE}
}

@article{yasunaga2025mmrewardbench,
  title={Multimodal rewardbench: Holistic evaluation of reward models for vision language models},
  author={Yasunaga, Michihiro and Zettlemoyer, Luke and Ghazvininejad, Marjan},
  journal={arXiv preprint arXiv:2502.14191},
  year={2025}
}

@inproceedings{zou2026informativeness,
  title={When Vision-Language Models Judge Without Seeing: Exposing Informativeness Bias},
  author={Zou, Xiaohan and Sridhar, Roshan and Safarzadeh, Mohammadtaher and Roth, Dan},
  booktitle={Proceedings of the 64th Annual Meeting of the Association for Computational Linguistics (Volume 1: Long Papers)},
  pages={15417--15448},
  year={2026}
}

@article{mj1grounded,
  title={MJ1: Multimodal Judgment via Grounded Verification},
  author={Kumar, Bhavesh and Feng, Dylan and Tang, Leonard},
  journal={arXiv preprint arXiv:2603.07990},
  year={2026}
}

@article{vlmconformal,
  title={VLM Judges Can Rank but Cannot Score: Task-Dependent Uncertainty in Multimodal Evaluation},
  author={Kumar, Divake and Tayebati, Sina and Naik, Devashri and Krishnan, Ranganath and Trivedi, Amit Ranjan},
  journal={arXiv preprint arXiv:2604.25235},
  year={2026}
}

@article{illposed,
  title={Ill-Posed by Design: Probing Evidence Use in VLMs},
  author={Meivar, Boaz and Perek, Shaked and Shvartzman, Shani and Schwartz, Eli and Avidan, Shai},
  journal={arXiv preprint arXiv:2606.24335},
  year={2026}
}

@inproceedings{fooling,
  title={Fooling the LVLM judges: Visual biases in LVLM-based evaluation},
  author={Hwang, Yerin and Lee, Dongryeol and Min, Kyungmin and Kang, Taegwan and Kim, Yongil and Jung, Kyomin},
  booktitle={Proceedings of the 2025 Conference on Empirical Methods in Natural Language Processing},
  pages={23197--23216},
  year={2025}
}

@article{viable,
  title={A Visually Impaired Assistance Benchmark for VLM-as-a-Judge Evaluation},
  author={Zhao, Yi and Wang, Siqi and Hu, Zhe and Li, Yushi and Li, Jing},
  journal={arXiv preprint arXiv:2605.31351},
  year={2026}
}

@inproceedings{langbias,
  title={Looking beyond text: Reducing language bias in large vision-language models via multimodal dual-attention and soft-image guidance},
  author={Zhao, Haozhe and Si, Shuzheng and Chen, Liang and Zhang, Yichi and Sun, Maosong and Chang, Baobao and Zhang, Minjia},
  booktitle={Proceedings of the 2025 Conference on Empirical Methods in Natural Language Processing},
  pages={19677--19701},
  year={2025}
}

@inproceedings{debiasedselfjudge,
  title={Improving Alignment in LVLMs with Debiased Self-Judgment.},
  author={Yang, Sihan and Cui, Chenhang and Zhao, Zihao and Zhou, Yiyang and Yan, Weilong and Wei, Ying and Yao, Huaxiu},
  booktitle={EMNLP (Findings)},
  pages={8213--8232},
  year={2025}
}

@article{wang2025visualprm,
  title={Visualprm: An effective process reward model for multimodal reasoning},
  author={Wang, Weiyun and Gao, Zhangwei and Chen, Lianjie and Chen, Zhe and Zhu, Jinguo and Zhao, Xiangyu and Liu, Yangzhou and Cao, Yue and Ye, Shenglong and Zhu, Xizhou and others},
  journal={arXiv preprint arXiv:2503.10291},
  year={2025}
}

@article{evpv,
  title={Grounding the Score: Explicit Visual Premise Verification for Reliable Vision-Language Process Reward Models},
  author={Wang, Junxin and Guan, Dai and Qiu, Weijie and Li, Zhihang and Gai, Yongbo and Yang, Zhengyi and Zhou, Mengyu and Zhao, Erchao and Jiang, Xiaoxi and Jiang, Guanjun},
  journal={arXiv preprint arXiv:2603.16253},
  year={2026}
}

@article{vrprm,
  title={Vrprm: Process reward modeling via visual reasoning},
  author={Chen, Xinquan and Yue, Chongying and Liu, Bangwei and Wang, Xuhong and Wang, Yingchun and Lu, Chaochao},
  journal={arXiv preprint arXiv:2508.03556},
  year={2025}
}

@article{faithfulgrpo,
  title={Faithful grpo: Improving visual spatial reasoning in multimodal language models via constrained policy optimization},
  author={Kancheti, Sai Srinivas and Kanade, Aditya and Sinha, Rohit and Balasubramanian, Vineeth N and Ganu, Tanuja},
  journal={arXiv preprint arXiv:2604.08476},
  year={2026}
}

@article{treble,
  title={Treble counterfactual vlms: A causal approach to hallucination},
  author={Li, Li and Qu, Jiashu and Song, Linxin and Zhou, Yuxiao and Qin, Yuehan and Yang, Tiankai and Zhao, Yue},
  journal={arXiv preprint arXiv:2503.06169},
  year={2025}
}

@article{countercount,
  title={CounterCount: A Diagnostic Framework for Counting Bias in Vision Language Models},
  author={Alzahrani, Reem and Alshanqiti, Hassan and Hemid, Bushra Bin and Alyafeai, Zaid and Eldesokey, Abdelrahman and Ghanem, Bernard},
  journal={arXiv preprint arXiv:2605.17826},
  year={2026}
}

@inproceedings{hudson2019gqa,
  title={Gqa: A new dataset for real-world visual reasoning and compositional question answering},
  author={Hudson, Drew A and Manning, Christopher D},
  booktitle={2019 IEEE/CVF Conference on Computer Vision and Pattern Recognition (CVPR)},
  pages={6693--6702},
  year={2019},
  organization={IEEE}
}

@inproceedings{masry2022chartqa,
  title={Chartqa: A benchmark for question answering about charts with visual and logical reasoning},
  author={Masry, Ahmed and Tan, Jia Qing and Joty, Shafiq and Hoque, Enamul and others},
  booktitle={Findings of the association for computational linguistics: ACL 2022},
  pages={2263--2279},
  year={2022}
}

@inproceedings{lu2024mathvista,
  title={Mathvista: Evaluating mathematical reasoning of foundation models in visual contexts},
  author={Lu, Pan and Bansal, Hritik and Xia, Tony and Liu, Jiacheng and Li, Chunyuan and Hajishirzi, Hannaneh and Cheng, Hao and Chang, Kai-Wei and Galley, Michel and Gao, Jianfeng},
  booktitle={International Conference on Learning Representations},
  volume={2024},
  pages={23439--23554},
  year={2024}
}

@article{lu2022scienceqa,
  title={Learn to explain: Multimodal reasoning via thought chains for science question answering},
  author={Lu, Pan and Mishra, Swaroop and Xia, Tanglin and Qiu, Liang and Chang, Kai-Wei and Zhu, Song-Chun and Tafjord, Oyvind and Clark, Peter and Kalyan, Ashwin},
  journal={Advances in neural information processing systems},
  volume={35},
  pages={2507--2521},
  year={2022}
}

@inproceedings{guan2024hallusionbench,
  title={Hallusionbench: an advanced diagnostic suite for entangled language hallucination and visual illusion in large vision-language models},
  author={Guan, Tianrui and Liu, Fuxiao and Wu, Xiyang and Xian, Ruiqi and Li, Zongxia and Liu, Xiaoyu and Wang, Xijun and Chen, Lichang and Huang, Furong and Yacoob, Yaser and others},
  booktitle={2024 IEEE/CVF Conference on Computer Vision and Pattern Recognition (CVPR)},
  pages={14375--14385},
  year={2024},
  organization={IEEE}
}

@article{figureqa,
  title={Figureqa: An annotated figure dataset for visual reasoning},
  author={Kahou, Samira Ebrahimi and Michalski, Vincent and Atkinson, Adam and K{\'a}d{\'a}r, {\'A}kos and Trischler, Adam and Bengio, Yoshua},
  journal={arXiv preprint arXiv:1710.07300},
  year={2017}
}

@inproceedings{mmrb2,
  title={Multimodal rewardbench 2: Evaluating omni reward models for interleaved text and image},
  author={Hu, Yushi and Askari-Hemmat, Reyhane and Hall, Melissa and Dinan, Emily and Zettlemoyer, Luke and Ghazvininejad, Marjan},
  booktitle={Proceedings of the IEEE/CVF Conference on Computer Vision and Pattern Recognition},
  pages={36904--36915},
  year={2026}
}

@article{ppjd,
  title={Mitigating Perceptual Judgment Bias in Multimodal LLM-as-a-Judge via Perceptual Perturbation and Reward Modeling},
  author={Park, Seojeong and Choi, Jiho and Kang, Junyong and Lee, Seonho and Shin, Jaeyo and Shim, Hyunjung},
  journal={arXiv preprint arXiv:2606.02578},
  year={2026}
}

@article{vlmbiased,
  title={Vision language models are biased},
  author={Vo, An and Nguyen, Khai-Nguyen and Taesiri, Mohammad Reza and Dang, Vy Tuong and Nguyen, Anh Totti and Kim, Daeyoung},
  journal={arXiv preprint arXiv:2505.23941},
  year={2025}
}

@article{arbitration,
  title={Arbitration Failure, Not Perceptual Blindness: How Vision-Language Models Resolve Visual-Linguistic Conflicts},
  author={Nooralahzadeh, Farhad and Rohanian, Omid and Zhang, Yi and F{\"u}rst, Jonathan and Stockinger, Kurt},
  journal={arXiv preprint arXiv:2604.09364},
  year={2026}
}

@article{seeingsaying,
  title={Are vlms seeing or just saying? uncovering the illusion of visual re-examination},
  author={Shi, Chufan and Yang, Cheng and Wu, Yaokang and Jin, Linghao and Shui, Bo and Berg-Kirkpatrick, Taylor and Ma, Xuezhe},
  journal={arXiv preprint arXiv:2605.15864},
  year={2026}
}

@article{medcf,
  title={Do Medical Vision Language Models Actually See? A Counterfactual Grounding Framework and Hard-Negative Contrastive Training for Visually-Reliant Medical VLMs},
  author={Zafar, Anas and Murali, Leema Krishna and Bharadwaj, Siddhant and Vashist, Ashish and Wu, Jia},
  journal={arXiv preprint arXiv:2607.03647},
  year={2026}
}

@inproceedings{gao2023overopt,
  title={Scaling laws for reward model overoptimization},
  author={Gao, Leo and Schulman, John and Hilton, Jacob},
  booktitle={International conference on machine learning},
  pages={10835--10866},
  year={2023},
  organization={PMLR}
}

@article{ouyang2022instruct,
  title={Training language models to follow instructions with human feedback},
  author={Ouyang, Long and Wu, Jeffrey and Jiang, Xu and Almeida, Diogo and Wainwright, Carroll and Mishkin, Pamela and Zhang, Chong and Agarwal, Sandhini and Slama, Katarina and Ray, Alex and others},
  journal={Advances in neural information processing systems},
  volume={35},
  pages={27730--27744},
  year={2022}
}

@article{skalse2022hacking,
  title={Defining and characterizing reward gaming},
  author={Skalse, Joar and Howe, Nikolaus and Krasheninnikov, Dmitrii and Krueger, David},
  journal={Advances in neural information processing systems},
  volume={35},
  pages={9460--9471},
  year={2022}
}

@article{rewardattack,
  title={Reward under attack: Analyzing the robustness and hackability of process reward models},
  author={Tiwari, Rishabh and Tomar, Aditya and Bamba, Udbhav and Maheswaran, Monishwaran and Yang, Heng and Mahoney, Michael W and Keutzer, Kurt and Gholami, Amir},
  journal={arXiv preprint arXiv:2603.06621},
  year={2026}
}

@article{discreterm,
  title={Discretizing Reward Models},
  author={Viswanathan, Vijay and Wang, Shiqi and Hazarika, Devamanyu and Nagpal, Chirag and Wu, Tongshuang and Neubig, Graham and Mao, Yuning},
  journal={arXiv preprint arXiv:2606.21795},
  year={2026}
}

@article{rewardauditor,
  title={Reward auditor: Inference on reward modeling suitability in real-world perturbed scenarios},
  author={Zang, Jianxiang and Wei, Yongda and Bai, Ruxue and Jiang, Shiyu and Mo, Nijia and Li, Binhong and Sun, Qiang and Liu, Hui},
  journal={arXiv preprint arXiv:2512.00920},
  year={2025}
}

@article{kenton2024oversight,
  title={On scalable oversight with weak llms judging strong llms},
  author={Kenton, Zachary and Siegel, Noah Y and Kram{\'a}r, J{\'a}nos and Brown-Cohen, Jonah and Albanie, Samuel and Bulian, Jannis and Agarwal, Rishabh and Lindner, David and Tang, Yunhao and Goodman, Noah and others},
  journal={Advances in Neural Information Processing Systems},
  volume={37},
  pages={75229--75276},
  year={2024}
}

%%%%%%%%%%%%%%%%%%%%%%%%%%%%%%%%%%%%%%%%%%%%%%%%%%%%%%%%%%%%%%%%%%%%%%%%%%%%%%%%
\appendix

% The supplement's opening note, carried over. Its numbering references are
% rewritten as \cref to the merged document's own labels.
\noindent The appendix supplies the structural causal model and the supporting
results (\cref{sec:scm}), the proofs (\cref{sec:proofs}), the experimental
detail (\cref{sec:expdetail}), and the reproducibility notes and full grids
(\cref{sec:checklist}). The main paper states, in order: \cref{def:vgs} the
Verdict Grounding Score, \cref{ass:med} consistency judging, \cref{lem:perc} the
reading bound, \cref{ass:match} matched and separable attributes,
\cref{thm:bound} the identification of the edit-miss rate, \cref{cor:order} the
ordering and what a violation certifies, \cref{cor:onesided} the one-sided error
and the false-alarm certificate, and \cref{prop:gap} the image-blindness
certificate. Proofs of all of these are in \cref{sec:proofs}. Five supporting
results that the main paper carries in prose rather than as displayed statements
are stated and proved in \cref{sec:support}, and Proposition~A.0 records
identification by construction.

\section{Structural causal model, identification, estimation}
\label{sec:scm}

\paragraph{Structural causal model and identification}
Let an instance be generated by exogenous $U$ with $(X,R)=f_{XR}(U)$, and let
the ground truth $Y=y(X,R)$ be a deterministic function of the pair. A judge
is $V=J(X,R;U_J)$ with judge randomness $U_J$ independent of $U$. For a value
$x'$, the intervention $\dof(X=x')$ replaces the image argument and induces
the interventional law $P\big(J(x',R;U_J)\mid R\big)$. The controlled direct
effect of the image on the verdict, holding $R$ fixed, contrasts two such
interventions. $\vgs$ is the population contrast between the truth-flipping
intervention $\dof(X=\tilde X)$ and the truth-preserving control
$\dof(X=X^{c})$, each taken relative to $\dof(X=X)$.

\begin{supprop}{Proposition A.0 (Identification by construction)}
Because the diagnostic realizes $\tilde X$ and $X^{c}$ as constructed images
and queries $J$ on them, the laws $P(J(\tilde X,R))$, $P(J(X^{c},R))$ and
$P(J(X,R))$ are sampled directly. Hence $\vgs$, $\gedr$ and $\ledr$ are
point-identified from the collected verdicts without any unconfoundedness or
back-door assumption. The requirements are edit validity
($y(\tilde X,R)=\neg y(X,R)$, $y(X^{c},R)=y(X,R)$) and i.i.d.\ sampling of
instances.
\end{supprop}

\begin{proof}
Each quantity is a functional of the three interventional laws above.
Physical construction of the intervened images means the observed verdicts
are i.i.d.\ draws from exactly those laws, so the plug-in functionals equal
the estimands in expectation. Edit validity is what makes
$\Delta_{\mathrm{flip}}$ a response to a truth change and
$\Delta_{\mathrm{null}}$ a response to none.
\end{proof}

Edit validity is an empirical premise, not a theorem. \Cref{sec:expdetail}
reports the validation we ran for it, and \cref{sec:gqa} reports the dataset on
which it failed.

\subsection{Supporting results, stated and proved}
\label{sec:support}

The five statements below (A.1 to A.5; A.0 above is the identification
argument) are routine and are not the paper's contribution, so the main text
carries them in prose and points here for the statement and the proof.

\begin{supprop}{Proposition A.1 (Range and anchors)}
$\vgs(J)\in[-1,1]$ for every judge. If $J$ is image-blind ($J(x,r)=g(r)$ for
some $g$ and all $x$) then $\vgs(J)=0$. If $J$ is grounded ($J(X,R)=y(X,R)$
a.s.) and the edits are valid, then $\vgs(J)=1$ for binary or pairwise verdicts.
\end{supprop}

\begin{proof}
$\vgs$ is a difference of two probabilities, hence in $[-1,1]$. If $J(x,r)=g(r)$
then $J(\tilde X,R)=J(X^{c},R)=J(X,R)$ pointwise, so both terms vanish. If
$J=y$, edit validity gives $y(\tilde X,R)=\neg y(X,R)$, so
$\Delta_{\mathrm{flip}}=1$, while $y(X^{c},R)=y(X,R)$ gives
$\Delta_{\mathrm{null}}=0$.
\end{proof}

\begin{supprop}{Proposition A.2 (Accuracy does not bound grounding)}
For every $a\in[0,1]$ there exist a benchmark distribution and a judge with
preference accuracy $a$ and $\vgs=0$, and all four combinations of high/low
accuracy and high/low $\vgs$ are realizable.
\end{supprop}

\begin{proof}
Take any image-blind $J(x,r)=g(r)$; by Proposition~A.1
$\vgs(J)=0$ whatever $g$ is, and choosing $g$ to agree with $Y$ on a fraction
$a$ of instances realizes accuracy $a$. For the remaining combinations, $J=y$
gives accuracy $1$ with $\vgs=1$, and $J=\neg y$ gives accuracy $0$ with
$\vgs=1$.
\end{proof}

\begin{supprop}{Proposition A.3 (Image sensitivity caps the accuracy the image can buy)}
Write $a(J)=\Pr[J(X,R)=Y]$ and let $J_{x_0}(\cdot,r)=J(x_0,r)$ freeze the image
at a reference $x_0$. Then
$\lvert a(J)-a(J_{x_0})\rvert\le \Pr[J(X,R)\neq J(x_0,R)]$.
\end{supprop}

\begin{proof}
$a(J)-a(J_{x_0})=\E[\mathbf 1\{J(X,R)=Y\}-\mathbf 1\{J(x_0,R)=Y\}]$, and
the integrand is $0$ on the event $\{J(X,R)=J(x_0,R)\}$ and bounded by $1$ in
absolute value elsewhere.
\end{proof}

\begin{supprop}{Proposition A.4 (Which image each probe is a property of)}
Consider a comparison of the audited form, a clean trace against one carrying an
injected visual error, scored on the \emph{unedited} image. Under
\cref{ass:med} and \cref{ass:match}(a) its success rate is $\pi_1=\gedr$, which
contains no term
in $\pi_2$. By \cref{thm:bound}, $\vgs=\pi_2-\Delta_{\mathrm{null}}$ instead
depends on
the judge's reading of the edited image $\tilde X$.
\end{supprop}

\begin{proof}
The first claim is the first equality of \cref{thm:bound} applied to that
comparison;
the second is \cref{cor:onesided}. Neither involves the other's argument.
\end{proof}

\begin{supprop}{Corollary A.5 (Estimation)}
The plug-in estimators of $\vgs$, $\gedr$ and $\ledr$ over $n$ i.i.d.\
instances are unbiased, and
$n\ge\tfrac{2}{\varepsilon^{2}}\log\tfrac{4}{\delta}$ suffices for
$\lvert\widehat{\vgs}-\vgs\rvert\le\varepsilon$ with probability at least
$1-\delta$.
\end{supprop}

\begin{proof}
Each estimator is a sample mean of a bounded indicator, hence unbiased.
$\widehat{\vgs}$ is a difference of two such means; applying Hoeffding at level
$\varepsilon/2$ to each and a union bound gives failure probability at most
$4\exp(-n\varepsilon^{2}/2)$, which is at most $\delta$ for the stated $n$.
\end{proof}

\subsection{General detection-gap decomposition}
\label{sec:decomp}

Let $b(J)=\tfrac{1-p_{\mathrm{tie}}}{2}\le\tfrac12$ be the judge's
visual-detection rate with the image ablated, estimable by running the judge
under $x_\varnothing$. Write $\gedr=b(J)+g_{\mathrm{img}}(J)$, splitting
detection into a text-channel baseline and an image-driven excess. By
\cref{prop:gap} an image-blind judge has $g_{\mathrm{img}}=0$, so all detection
above $b(J)$ is attributable to reading the image. Logical corruption is
text-internal, so $\ledr$ is available from the text channel; hence
$\ledr-\gedr=(\ledr-b(J))-g_{\mathrm{img}}(J)$.

This decomposition is only as good as \cref{ass:med}, and \cref{sec:sciqa}
reports a
dataset on which our own text-only checks reached $\gedr=0.64$, above the
$\tfrac12$ ceiling the assumption implies. $b(J)$ was measured directly for
all 5 open-weight judges on all 4 datasets under $x_\varnothing$
(text-only): per-dataset point-estimate ranges are ChartQA $0.00$--$0.51$,
MathVista $0.00$--$0.50$, HallusionBench $0.25$--$0.60$, ScienceQA
$0.36$--$0.64$, and none of the $20$ cells' $95\%$ CI excludes $0.5$ (full
result and its implications in \cref{sec:audit} and \cref{sec:sciqa}).
$\Delta_{\mathrm{null}}$ is tabulated per cell
in \cref{stab:main}, from data already collected with no new judge calls,
converting the shortfall from identified-up-to-$\Delta_{\mathrm{null}}$ into
a point estimate throughout. The read-attribute experiment measures the
edit-responsive reading rate
$\mathrm{ERR}$ of \cref{lem:perc} directly: each open-weight judge reports the
attribute value under $x$ and under $\tilde x$ via \texttt{read\_attribute()},
recording whether it moves, which converts $\pi_1-\pi_2$ from an inferred
residual into an observed rate (\cref{sec:readattr} reports the result and the
three mechanisms it surfaces).

\paragraph{What \texorpdfstring{$\beta$}{beta} is and is not}
$\beta(x)$ is the value the
verdict actually uses, not the content of the visual encoder. \Cref{lem:perc}
therefore
holds whether the edit is never perceived or is perceived, attended, and then
outweighed by a language prior downstream. Both leave $\beta$ unchanged, and
recent work argues for each: side-by-side detection failure supports the first,
while attention-level analyses that find the region attended and the evidence
encoded support the second. We do not adjudicate, and the bound does not
require us to.

\subsection{Artifact-matched control}
\label{sec:artifact}

Model the flip probability under an edit operator $E$ as $\pi_E=c(E)+\alpha$,
where $c(E)$ is the content-induced component and $\alpha\ge0$ is an
artifact-induced component that depends on the operator's footprint but not on
whether content changed. Under this additive, content-independent artifact
model, the null edit $X^{\dagger}$ applies the same operator to a
verdict-irrelevant region, so $c=0$ and $\pi_{X^{\dagger}}=\alpha$, while the
truth-flipping edit has $\pi_{\tilde X}=c_{\mathrm{flip}}+\alpha$. Using
$X^{c}=X^{\dagger}$ in \cref{def:vgs} yields
$\vgs=(c_{\mathrm{flip}}+\alpha)-\alpha=c_{\mathrm{flip}}$, netting out both
re-encoding noise and edit artifacts. The additive model is an idealization and
breaks if artifact
sensitivity interacts with content, for instance if a
judge only notices the operator's footprint when it lands on a
verdict-relevant region.

$X^{\dagger}$ records exist for the eight original judges on all four datasets
(2,129 records); none were collected for Gemini-3.5-Flash. Recomputing $\vgs$
with $X^{c}=X^{\dagger}$ for those $32$ cells and comparing to the
$X^{+}$-based value, the $10$ of $32$ cells where $|{\rm diff}|\ge0.01$ are:

\begin{table}[t]
\centering
\caption{\textbf{Artifact-matched control.} The ten of $32$ cells where the
$X^{\dagger}$-based and $X^{+}$-based $\vgs$ differ by at least $0.01$; the
other $22$ match to two decimals.}
\label{tab:artifact}
\small
\begin{tabular}{@{}ll S[table-format=-1.2] S[table-format=-1.2] S[table-format=-1.2]@{}}
\toprule
\kilth{Judge} & \kilth{Dataset} & {\kilth{$\vgs(X^{+})$}} & {\kilth{$\vgs(X^{\dagger})$}} & {\kilth{diff}} \\
\midrule
LLaVA-OneVision & ChartQA & 0.21 & 0.20 & -0.01 \\
Qwen2.5-VL-7B & ChartQA & 0.49 & 0.41 & -0.08 \\
InternVL3-8B & ChartQA & 0.29 & 0.27 & -0.02 \\
InternVL3-8B & MathVista & 0.06 & 0.19 & 0.12 \\
Llama-3.2-Vision & ChartQA & -0.05 & 0.01 & 0.06 \\
GPT-4o & MathVista & 0.12 & 0.19 & 0.06 \\
Gemini-2.5-Flash & ChartQA & 0.38 & 0.27 & -0.11 \\
Claude-Sonnet-4.5 & MathVista & 0.00 & 0.06 & 0.06 \\
VisualPRM-8B-v1.1 & ChartQA & 0.54 & 0.46 & -0.08 \\
VisualPRM-8B-v1.1 & ScienceQA & 0.59 & 0.68 & 0.09 \\
\bottomrule
\end{tabular}
\end{table}

Mean absolute difference across all $32$ cells: $0.025$;
maximum $0.125$ (InternVL3-8B/MathVista). Edit-artifact sensitivity is a
small effect relative to the content effect $\vgs$ measures under either
control, so the $X^{+}$-based numbers reported throughout are not
meaningfully inflated by artifacts, consistent with the additive model
above holding reasonably well in practice.

\subsection{Graded-verdict extension}
\label{sec:graded}

For a scalar judge $s=J(x,r)\in\R$, let
$d_{\mathrm{flip}}=s(\tilde X,R)-s(X,R)$ and
$d_{\mathrm{null}}=s(X^{c},R)-s(X,R)$, and define
$\vgs_{s}=\E[\sigma\,d_{\mathrm{flip}}]-\E[|d_{\mathrm{null}}|]$,
where $\sigma=+1$ if the flip raises the ground-truth target and $-1$
otherwise, with scores min-max normalized per judge. Thresholding $s$ at the
judge's decision boundary recovers the binary verdict and reduces $\vgs_s$ to
\cref{def:vgs}, so the graded score generalizes it. The anchors of
Proposition~A.1
carry over: $\vgs_s$ attains its maximum for a grounded scorer whose target
equals $y$, and $\vgs_s=0$ for an image-blind scorer. The graded score
inherits the judge's calibration, which is why the main paper reports the
binary form.

%%%%%%%%%%%%%%%%%%%%%%%%%%%%%%%%%%%%%%%%%%%%%%%%%%%%%%%%%%%%%%%%%%%%%%%%%%%%%%%%
\section{Proofs}
\label{sec:proofs}

Proofs appear in the order the main paper states the results.

\begin{proof}[Proof of \cref{cor:onesided}]
Under \cref{ass:med} and \cref{ass:match}(b), $\Delta_{\mathrm{flip}}=\pi_2$
(proof of \cref{thm:bound}), so
$\vgs=\Delta_{\mathrm{flip}}-\Delta_{\mathrm{null}}=\pi_2-\Delta_{\mathrm{null}}$.
Under \cref{ass:med} and \cref{ass:match}(a), $\gedr=\pi_1$
(\cref{thm:bound}). Hence
$\gedr-\vgs=(\pi_1-\pi_2)+\Delta_{\mathrm{null}}$. If $\pi_2\le\pi_1$ then both
summands are non-negative, giving
$\vgs\le\gedr-\Delta_{\mathrm{null}}\le\gedr$, and the deficit
$\gedr-\vgs$ equals $(\pi_1-\pi_2)+\Delta_{\mathrm{null}}$ exactly. The
certificate is \cref{prop:gap} applied to the same cell: $\gedr>b$ excludes an
image-blind judge, so a low $\vgs$ in that cell cannot be attributed to the
judge ignoring the image, and by the display above the whole of $\gedr-\vgs$ is
accounted for by missed edits and oversensitivity. Setting
$\Delta_{\mathrm{null}}=0$ recovers $\vgs=\pi_2$ as the specific-judge case.
\end{proof}

\begin{proof}[Proof of \cref{lem:perc}]
Under \cref{ass:med} the verdict is
$J(x,r)=\varphi_r(\mathbf 1\{\beta(x)=a(r)\})$. On the event
$\{\beta(\tilde X)=\beta(X)\}$ the indicator takes the same value under both
images, so $J(\tilde X,R)$ and $J(X,R)$ are the same $\varphi_R$ evaluated at
the same argument and agree. Hence
$\{J(\tilde X,R)\neq J(X,R)\}\subseteq\{\beta(\tilde X)\neq\beta(X)\}$,
$\Delta_{\mathrm{flip}}\le\mathrm{ERR}$, and $\vgs\le\mathrm{ERR}$ because
$\Delta_{\mathrm{null}}\ge0$. No assumption beyond \cref{ass:med} is used, and in
particular nothing is assumed about \emph{why} the reading fails to move.
\end{proof}

\subsection{Two facts that hold without Assumption 2}
\label{sec:twofacts}

Both are used repeatedly below. Recall $a_T=v^{\star}(X)$, $a_v\neq a_T$ the
injected value, and $v^{\star}(\tilde X)=a_v$ by edit validity.

\emph{(i) $\gedr=\pi_1$, using \cref{ass:match}(a).} Under \cref{ass:med},
$J(X,T)=\varphi_T(\mathbf 1\{\beta(X)=a_T\})$ and
$J(X,T_v)=\varphi_{T_v}(\mathbf 1\{\beta(X)=a_v\})$. \Cref{ass:match}(a) makes $T$
and $T_v$ exchangeable in the text channel, so the two traces share a scoring
map, $\varphi_T=\varphi_{T_v}=:\varphi$ with $\varphi$ strictly increasing.
Then $J(X,T)\succ J(X,T_v)$ exactly when
$\mathbf 1\{\beta(X)=a_T\}>\mathbf 1\{\beta(X)=a_v\}$, and because
$a_T\neq a_v$ at most one indicator can be $1$, so the event reduces to
$\{\beta(X)=a_T\}$ and $\gedr=\pi_1$. Without (a) the two traces could carry
different baselines and the reduction fails, which is why (a) is load-bearing
here and not only in \cref{prop:gap}.

\emph{(ii) $\Delta_{\mathrm{flip}}$ is a function of one indicator.} The trace
$T_v$ is held fixed across the two images, so by \cref{ass:med} the verdict at
image $z$ is $\varphi_{T_v}(m(z))$ with
$m(z)=\mathbf 1\{\beta(z)=a_v\}$; since $\varphi_{T_v}$ is strictly increasing
it is injective on $\{0,1\}$, so the verdict changes exactly when $m$ changes
and $\Delta_{\mathrm{flip}}=\Pr[m(\tilde X)\neq m(X)]$. \Cref{ass:med} also fixes
the judge's randomness across image conditions; without that, both
$\Delta_{\mathrm{flip}}$ and $\Delta_{\mathrm{null}}$ pick up the same
intrinsic instability term, which cancels in $\vgs$ but leaves
$\Delta_{\mathrm{flip}}=\pi_2$ holding only up to it. This is the one place
where a judge queried at nonzero temperature without common random numbers
would break the identity rather than merely widen it.

\begin{proof}[Proof of \cref{thm:bound}]
By \cref{sec:twofacts}(ii), writing $q=\Pr[m(X)=1]=\Pr[\beta(X)=a_v]$ and noting
$\Pr[m(\tilde X)=1]=\Pr[\beta(\tilde X)=v^{\star}(\tilde X)]=\pi_2$,
\begin{equation}\label{eq:gen}
\Delta_{\mathrm{flip}}=q+\pi_2-2c,\qquad
c=\Pr[m(X)=1,\ m(\tilde X)=1].
\end{equation}
\Cref{ass:match}(b) sets $q=0$, which forces $c=0$ because $c\le q$, so
$\Delta_{\mathrm{flip}}=\pi_2$. With \cref{sec:twofacts}(i) this gives
$\vgs=\Delta_{\mathrm{flip}}-\Delta_{\mathrm{null}}=\pi_2-\Delta_{\mathrm{null}}$
and $\gedr=\pi_1$, hence
$\gedr-\vgs=\pi_1-\pi_2+\Delta_{\mathrm{null}}$, which rearranges to the
statement. Every term on the right is measured by the protocol, so
$\pi_1-\pi_2$ is point-identified. Nothing is assumed about the sign of
$\pi_1-\pi_2$.
\end{proof}

\begin{proof}[Proof of \cref{cor:order}]
Under the hypotheses of \cref{thm:bound},
$\gedr-\vgs=(\pi_1-\pi_2)+\Delta_{\mathrm{null}}$, so $\gedr-\vgs\ge0$ if and
only if $\pi_2-\pi_1\le\Delta_{\mathrm{null}}$. Since
$\Delta_{\mathrm{null}}\ge0$, $\pi_2\le\pi_1$ is sufficient. Conversely
$\vgs>\gedr$ forces $\pi_2>\pi_1+\Delta_{\mathrm{null}}\ge\pi_1$, so a
violation certifies that the edit strictly raised the judge's read accuracy on
the attribute.
\end{proof}

\subsection{Why the ordering is not definitional}
\label{sec:notdef}

$\vgs$ and $\gedr$ are functionals of different interventions and are not
constrained by definition. Let $J$ flip its verdict whenever any two images
differ and let $h$ assign the same score to every trace. Then
$\Delta_{\mathrm{flip}}=1$, $\Delta_{\mathrm{null}}=1$ if the re-encoding is
detectable and $0$ otherwise, and $\gedr=0$ because the judge never separates
$T$ from $T_v$. Taking a re-encoding the judge cannot detect gives $\vgs=1>0=
\gedr$. \Cref{cor:order} is therefore a falsifiable prediction on real judges
rather than an algebraic consequence.

\subsection{The identity without Assumption 2(b)}
\label{sec:nob}

Dropping \cref{ass:match}(b) and keeping \cref{eq:gen},
\[
\gedr-\vgs=\pi_1-q-\pi_2+2c+\Delta_{\mathrm{null}},
\]
so the shortfall identifies $\pi_1-\pi_2-q+2c$ rather than $\pi_1-\pi_2$. The
binary-attribute case is the specialization $q=1-\pi_1$, since a misread of a
two-valued attribute necessarily lands on $a_v$:
\[
\gedr-\vgs=(2\pi_1-1)-\pi_2+2c+\Delta_{\mathrm{null}},
\]
and the ordering holds if and only if
$2\pi_1-1\ge\pi_2-2c-\Delta_{\mathrm{null}}$,
which is not implied by $\pi_2\le\pi_1$. HallusionBench and MathVista are
binary in this sense. We therefore report them separately in the main paper and
treat the ordering there as an observation rather than a confirmed prediction.
A judge below chance on the original attribute ($\pi_1<\tfrac12$) can violate
the ordering on a binary construction without the edit having become easier to
read, which is a second reading of the single MathVista exception.

\begin{proof}[Proof of \cref{prop:gap}]
An image-blind judge scores $T$ and $T_v$ from their text alone. Under
\cref{ass:match}(a) the pair is exchangeable in the text channel, so
$\Pr[J(T)\succ J(T_v)]=\Pr[J(T_v)\succ J(T)]=:q_0$, and
$2q_0+p_{\mathrm{tie}}=1$ gives $\gedr=q_0\le\tfrac12$. The contrapositive is
the certificate: $\gedr>\tfrac12$ implies the judge is not image-blind, after
which \cref{thm:bound} attributes the shortfall to $\pi_1-\pi_2$ and
$\Delta_{\mathrm{null}}$. \Cref{ass:match}(a) is the only part of
\cref{ass:match} this
proposition uses, and it is the part our own construction-gate evidence does
not fully establish on ScienceQA (\cref{sec:sciqa}).
\end{proof}

%%%%%%%%%%%%%%%%%%%%%%%%%%%%%%%%%%%%%%%%%%%%%%%%%%%%%%%%%%%%%%%%%%%%%%%%%%%%%%%%
\section{Experimental details}
\label{sec:expdetail}

\subsection{Datasets and ground truth}
\label{sec:datasets}

GroundJudge is constructed from datasets that expose structured visual ground
truth: GQA scene graphs, ChartQA source tables, MathVista figures, and
ScienceQA together with HallusionBench for science and hallucination-sensitive
cases. Because the visual ground truth is machine-readable, error injection
and counterfactual edits are automatic and exact, and no human annotation is
required. A human spot-check validates edit correctness on ChartQA and
HallusionBench ($n{=}50$ each, $100\%$ valid); MathVista and ScienceQA are
instead validated by full automated ground-truth reconstruction on every kept
instance. The main audit, the selection experiment and the reward-correlation
analysis all run on ChartQA ($n{=}100$), HallusionBench ($n{=}104$),
MathVista ($n{=}16$) and ScienceQA ($n{=}22$).

\subsection{MathVista: construction and scope}
\label{sec:mathvista}

MathVista ships no per-figure structured annotation and its \texttt{context}
metadata field is unreliable: instances tagged as pie charts or line plots
were found whose rendered image is a bar chart. We restrict to the
FigureQA-sourced subset of \texttt{testmini} with a yes/no comparison answer
($62$ of $1{,}000$ instances), which is the only subset admitting a
single-bar-size edit that flips the ground truth without other visible
inconsistencies. We re-derive chart geometry from pixels by connected-component
detection of each bar's rectangle, separating bars from axis lines and text by
shape rather than color, since some categories are literally named and colored
``Black.'' Each bar's category name is read by OCR under several rotations,
because label orientation is inconsistent across images. Every candidate
instance's predicted answer is recomputed from the detected bar sizes and
compared against the dataset's stated answer before the instance is kept:
$24$ instances pass bar detection, $17$ have their named bars identified, and
$16$ reproduce the stated ground truth exactly. One instance, a high-median
question on a six-bar chart, could not be reconciled under any tried median
convention and was dropped rather than guessed at. Each kept instance's
counterfactual edit is verified by resimulating the same predicate on the
post-edit bar sizes and confirming that it flips.

\subsection{ScienceQA: construction, and two leakage fixes}
\label{sec:sciqa}

We restrict to the ``State capitals'' category: a standardized
$750{\times}429$px US map with exactly one state filled in a highlight color
against a uniform base color, asking that state's capital. Identifying the
highlighted state is free because the question names it, so the only
re-derivation needed is locating the highlighted region and identifying a
different state's region to recolor into. We build a state-name to
region-centroid table from the dataset's own instances, match a candidate
replacement region by nearest centroid, and verify through a static
state-to-capital dictionary that the replacement's capital differs from the
true answer. Two of the original $50$ unique questions (Delaware and Rhode
Island, the two smallest states) fail highlight-region detection and are
dropped.

Two rounds of construction bugs were found through the main audit itself
rather than in advance. An initial trace design named the highlighted state in
text (``the
highlighted state is Wyoming\dots its capital is Denver''), which let a
text-only judge catch the visual error from US-geography knowledge alone. That
appeared as $\gedr\approx1$ for text-only judges in a first audit pass, where
\cref{prop:gap} says the rate should be at chance. Removing the state name from
the trace did not fix it, because the original question, passed to every judge
as context, also names the state; genericizing that reference still left
$\gedr\approx1$. The remaining cause was that the counterfactual's replacement
capital, chosen from any other state, was almost never among the question's
four multiple-choice options ($2/48$ by chance), and was therefore eliminable
by any judge as ``not one of the choices,'' independent of geography or the
image. Constraining the replacement to states whose capital is one of the
question's own wrong choices closed the leak. On the corrected construction,
text-only GEDR falls to $0.36$ to $0.64$ across judges and the instance count
drops from $48$ to $22$.

We report this history rather than omit it, and we draw the consequence the
main paper states: $0.64$ is above the $\tfrac12$ ceiling that \cref{ass:med}
implies, so the assumption is not fully restored by the fix. Text-only $\gedr$,
run
for all $5$ open-weight judges on all $4$ datasets with $2{,}000$-resample
bootstrap CIs, gives per-dataset ranges of ChartQA $0.00$--$0.51$, MathVista
$0.00$--$0.50$, HallusionBench $0.25$--$0.60$, ScienceQA $0.36$--$0.64$
(this dataset's range matches the one reported two paragraphs above
exactly). None of the $20$ cells' CIs exclude $0.5$, so no dataset meets the
withdrawal criterion originally proposed here --- see \cref{sec:audit} for the
full result and the point-estimate pattern worth noting anyway.

\subsection{GQA: an excluded negative result}
\label{sec:gqa}

GQA counterfactual edits use a deterministic HSV recolor of a scene-graph
object's region (SAM2 box-to-silhouette mask refinement, with hue separation
and saturation floors tuned against the region's dominant hue) after an
initial diffusion-inpainting construction plateaued near $60$ to $70\%$
human-validated edit rate. Seven rounds of fixes to the recolor pipeline
raised the rate from $55\%$ to a stable $70\%$ ($n{=}100$ human review), still
below the $95\%$ gate: mask blending in RGB rather than HSV space to remove a
halo artifact, bounding-box area filters, SAM2 mask refinement, exclusion of
low-texture background object names, a maximum-objects-per-scene filter
against visual redundancy, widened hue separation, and a raised minimum object
size for legibility at rendered scale. The remaining failures cluster into
three causes: recolors that are applied but not visually perceptible ($16/30$
failures), scenes containing an unlabeled visually-redundant duplicate of the
target object ($7/30$), and edits that do not read as the specifically named
object ($7/30$). We report this as a construction limitation specific to GQA's
real-world photographic diversity rather than a property of the audit, and
exclude GQA from every table rather than lower the bar or omit the attempt.

\subsection{Trace generation and error injection}
\label{sec:traces}

For each instance we obtain a multimodal reasoning trace, generated by a fixed
strong VLM or templated from the ground-truth annotations. We inject exactly
one visually-ungrounded step by selecting a ground-truth visual attribute
referenced in the trace and replacing it with a plausible false value drawn
from the dataset's attribute vocabulary (a chart value, an object count, a
color, an axis reading), while preserving grammaticality and downstream
textual consistency so the error cannot be detected from text alone. In a
separate condition we inject a logical error: a step that does not follow from
its predecessors, with all visual claims held true. The two conditions share
instances, so the logical-minus-visual contrast is within-instance.

\subsection{Image interventions}
\label{sec:interventions}

For each instance we form $x^{+}$, a content-preserving re-encoding that
preserves ground truth and measures intrinsic verdict instability;
$x_\varnothing$, a blank or noise image; $x'$, a mismatched image from a
different instance; and $\tilde x$, a counterfactual edit that makes the
injected claim true and flips the ground truth (\cref{sfig:interventions}
illustrates all four on one instance). The artifact-matched control
$x^{\dagger}$ of \cref{sec:artifact} was also constructed, for all 8 judges on
all 4
datasets (\cref{sec:artifact} reports the comparison against $x^+$).

\begin{figure}[t]
\centering
\kilgraphics[0.65\linewidth]{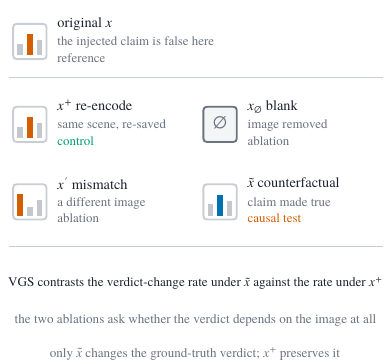}
\caption{\textbf{The four image interventions.} The reference cell is the
unedited image $x$, about which the injected claim is false; here the trace
claims that the tallest bar is blue and it is not. GroundJudge forms a
truth-preserving re-encoding $x^{+}$, a blank $x_\varnothing$ and a mismatched
$x'$, and a counterfactual $\tilde x$ edited so that the claim becomes true and
the ground-truth verdict flips.}
\label{sfig:interventions}
\end{figure}

\subsection{Judges}
\label{sec:judges}

Five open-weight judges run locally on a single GPU, loaded one at a time:
Qwen2.5-VL-7B-Instruct, InternVL3-8B (the natively transformers-integrated
checkpoint), LLaVA-OneVision-7B, Llama-3.2-11B-Vision-Instruct, and
VisualPRM-8B-v1.1, a trained multimodal process reward model scored through
its own step-level interface rather than free-text generation. Llama-3.2-Vision
sits behind a per-account license gate on Hugging Face that cleared after the
first round of runs; its results use the same code path, prompts and protocol
as the other open judges. Three proprietary judges (GPT-4o, Claude-Sonnet-4.5,
Gemini-2.5-Flash) are accessed through a single gateway routed to each
vendor's OpenAI-compatible endpoint rather than three separate vendor keys,
which changes neither prompts nor scoring. Two further process reward models
were considered and not run in this version.

Model identifiers used are exact: \texttt{Qwen/Qwen2.5-VL-7B-Instruct},
\texttt{OpenGVLab/InternVL3-8B-hf}, \texttt{llava-hf/llava-onevision-qwen2-7b-ov-hf},
\texttt{meta-llama/Llama-3.2-11B-Vision-Instruct},
\texttt{OpenGVLab/VisualPRM-8B-v1\_1}, \texttt{openai/gpt-4o},
\texttt{google/gemini-2.5-flash}, \texttt{anthropic/claude-
sonnet-4.5}, and
\texttt{google/gemini-3.5-flash} (the last accessed via OpenRouter). No
\texttt{revision} was pinned in \texttt{configs/judges.yaml} for any
open-weight checkpoint, and no API access date was logged anywhere in the
pipeline, so exact byte-for-byte reproduction cannot be guaranteed even with
the same code --- this is a real gap, not a reporting omission we are
covering up. GPU: all open-weight judges and this submission's new
read-attribute and magnitude-sweep experiments ran on single-GPU Slurm
allocations (\texttt{\#SBATCH --gres=gpu:1}, partition \texttt{gpu-A100},
i.e.\ one NVIDIA A100 per job, one judge loaded at a time). Judge-call
counts that are exactly countable from the released per-instance files: the
original construction-gate and main-audit collection (the paper's headline
\cref{tab:main} numbers, predating this session) is $26{,}712$ calls; the
read-attribute experiment (read\_attribute on 4 open-weight judges $\times$
4 datasets) is $1{,}936$ calls; the magnitude sweep (4 judges $\times$ 511
ChartQA magnitude instances) is $2{,}044$ calls. Total inference hours for
the original construction-gate and main-audit collection are not
recoverable from anything available to us now; this session's own
read-attribute and magnitude-sweep collection ran within the Slurm
wall-clock limits set in \texttt{panther\_jobs/run\_e4.sh} and
\texttt{run\_e6.sh} (24 hours each), but exact elapsed time was not logged
separately from queue wait time.

\subsection{Independent candidate pool for the selection experiment}
\label{sec:pool}

The selection experiment's original three-candidate pool (clean, visual-error,
logical-error) reuses traces already scored in the main audit, so a judge's
selection there could mirror its main-audit preference rather than reflect
best-of-$N$ behavior over a richer set. We build
additional visually-wrong candidates from each dataset's own ground-truth
vocabulary, with no separate generator model: for ChartQA, three further false
bar values at $+50\%$, $-50\%$ and $+25\%$ of the true value, clamped and
rounded as the original false value was; for ScienceQA, the two wrong
multiple-choice capitals not used as the counterfactual-edit target.
HallusionBench and MathVista are excluded from this extension because their
ground truth is a strict binary and the dataset admits exactly one possible
false claim per instance. Every additional candidate is scored against the
same unedited image $x$ the original pool uses, so a visually-wrong trace does
not need a matching edited image to exist. This required $8$ judges $\times$
($100$ ChartQA $+$ $22$ ScienceQA) instances $\times$ ($3$ or $2$) additional
candidates, or $2{,}752$ new judge calls.

\subsection{Metrics and statistics}
\label{sec:stats}

$\vgs$ follows \cref{def:vgs}. For pairwise judges the verdict is the
preference; for pointwise judges we threshold the score or use the score gap.
GEDR is the fraction of injected visual errors penalized, meaning the clean
trace is preferred or the corrupted score drops beyond the $x^{+}$ noise band;
logical-error detection is defined identically on the logical condition.
Ablation $\Delta$ is the mean absolute verdict change under $x_\varnothing$
and $x'$. The VGS interval is a paired nonparametric bootstrap over instances
($2{,}000$ resamples, fixed seed): each resample draws instance indices with
replacement and recomputes
$\mathrm{mean}(\text{flip indicator})-\mathrm{mean}(\text{null indicator})$ on
those indices, preserving the per-instance pairing between the two terms.
Columns in \cref{tab:main} are rounded independently, so the reported gap can
differ
from the rounded difference of the reported GEDR and LEDR by $0.01$; this
occurs in seven cells.

The shortfall (the headline quantity built from GEDR and VGS together) has a
real per-cell bootstrap CI, computed and reported in \cref{sec:ordering}: only
$25$ of $36$ cells
exclude $0$, not the $35$ implied by point estimates alone.
GEDR and LEDR's own per-cell intervals, as opposed to the shortfall's, are
not yet shown in that table --- \texttt{groundjudge\_metrics.audit\_judge}
already returns \texttt{GEDR\_lo/hi} and \texttt{LEDR\_lo/hi} for every
cell, so adding them is a table-layout task (two more columns or
CI-annotated cells), not an outstanding analysis. A paired permutation test
of $\vgs\neq0$ per cell (each cell's flip- vs.\ null-indicator, $10{,}000$
resamples) plus \texttt{groundjudge\_metrics.holm\_reject} across all $36$
$p$-values gives $26/36$ cells significant at uncorrected $p<0.05$ and
$23/36$ surviving Holm--Bonferroni. The three cells that flip from marginal to
non-significant are Qwen2.5-VL/MathVista ($p=0.0154$ uncorrected),
InternVL3-8B/ScienceQA ($p=0.0145$) and Gemini-3.5-Flash/MathVista; no cell
that was non-significant uncorrected becomes significant. Reported in the main
paper immediately after ``Judges are not ignoring the image.''

Cells share instances within a dataset and every judge sees the same instances,
so no sign test on tabulated cells is fully valid; the three below are
descriptive and each is at best an upper bound on the evidence, reported here
rather than in the main paper because the pooled instance-level permutation
test (\cref{sec:ordering}) is the one test
the design actually supports and supersedes them. The primary pool gives
$18/18$, $p=7.6\times10^{-6}$; clustering by judge gives $9/9$, $p=0.004$;
treating all cells as exchangeable gives $32/33$ non-tied, $p=7.9\times10^{-9}$.
All three agree in direction with the valid pooled test (mean shortfall
$0.265$, permutation $p<5\times10^{-5}$, $20{,}000$ permutations) reported in
\cref{sec:ordering}.

\subsection{Reference preference accuracy, measured and not reported}
\label{sec:refacc}

We measured an in-house preference accuracy on the VL-RewardBench test split by
scoring each of the two candidate responses independently with each judge's
pointwise score and predicting the higher-scored one, because no judge in our
pool exposes a native pairwise call. We do not report those numbers. They do
not reproduce the published protocol: the benchmark paper reports $65.4\%$ for
GPT-4o where our protocol gives $0.799$, Llama-3.2-11B-Vision is reported in
the literature near $0.38$ where ours gives $0.766$, and the released split is
described as $1{,}250$ curated examples against the $1{,}247$ we scored.
Presenting a number labelled ``VL-RewardBench accuracy'' that a reviewer would
compare against the leaderboard would be misleading, and Proposition~A.2 means no
such number bounds grounding in any case. The argument in the main paper is
internal to the audit and does not use it. If a reference preference number is
wanted for a future version, MMRB2 is the current standard and reports frontier
judges $10$ to $25$ points below human agreement.

\subsection{Downstream harm: what exists and what does not}
\label{sec:downstream}

The selection experiment selects by argmax against the true image and
compares against an oracle
that always picks the clean trace. Because ``ungrounded-selected'' means ``did
not pick the clean trace,'' that rate and task accuracy sum to one by
construction; the main paper reports one column rather than two for this
reason. The reward-correlation analysis reuses each judge's main-audit
pointwise scores as rewards and introduces
no new judge calls, so it is a re-analysis of the main audit rather than an
independent
measurement.

Both experiments were in fact already collected for all eight judges on all
four datasets (\texttt{results/E2/table2\_e2.csv},
\texttt{results/E3/table2\_e3.csv}, $32$ rows each) --- the claim above that
only two judges' worth of the reward-correlation analysis exists was stale
relative to the released
artifacts, the same pattern found for the artifact control. \Cref{stab:harm}
reports the full eight-judge grid.

\subsection{Does GEDR predict downstream selection?}
\label{sec:gedrpredict}

Proposition~A.4 makes GEDR the probe defined on the unedited image, so we tested
whether GEDR predicts the ungrounded-selection rate where VGS does not. Over
the eight judges, Spearman correlation with the selection rate is $0.00$ on
ChartQA and $-0.42$ on ScienceQA for VGS, and $+0.60$ and $-0.66$ for GEDR.
GEDR is the stronger correlate in both datasets and pooled over the $16$ cells
($-0.59$ against $-0.44$), but its sign reverses between datasets, and exact
permutation tests over all $8!=40{,}320$ orderings give $p=0.12$ and $p=0.086$.
We therefore claim no predictive advantage. The reversal is driven by
Claude-Sonnet-4.5, which has the lowest ChartQA GEDR ($0.21$) and the best
selection rate ($0.11$) and is also the judge with the largest blank-ablation
delta: a judge that abstains rather than discriminates can rank a clean trace
first without detecting anything. Proposition~A.4 concerns pairwise detection
while a selection rate is an argmax over a pool; these are different
functionals and eight judges cannot separate them. We drop the predictive
claim and keep Proposition~A.4 as a scoping result.
Widening the judge pool to settle the question properly remains future
work.

\subsection{Experiments that would close the open items}
\label{sec:close}

\textbf{The magnitude sweep --- run, null result.} The ChartQA
construction edits a bar value and so admits a magnitude parameter.
Re-rendering the counterfactual at several magnitudes predicted a monotone
rise in VGS while GEDR, computed on the unedited image, stays flat. Run for
all 4 open-weight judges: the prediction does NOT hold. 3 of 4 judges show
a negative Spearman correlation between magnitude and the swept quantity
(monotone permutation $p=0.80$--$0.98$); only Llama-3.2-Vision reaches
significance ($p=0.019$) on a nearly flat, degenerate curve. See
\cref{sec:audit} for the full result reported as found, and
\cref{sfig:magnitude} for the curves themselves.

\textbf{The reasoning-mode experiment.} One extra
condition each. If a thinking mode raises $\pi_2$ without raising $\pi_1$ the
failure is arbitration; if neither moves it is perception. Not run ---
needs Qwen3-VL/InternVL3.5's Instruct/Thinking variants (B10, blocked on
new model access).

\textbf{The read-attribute experiment --- run.} Asked each judge to report
the attribute under $x$ and under $\tilde x$ and recorded whether it moves,
for all 4 open-weight judges on all 4 datasets. It exposes two violations of
\cref{ass:med}, both in scoring rather than perception and both in the secondary
pool: a MathVista answer-polarity bias and a ScienceQA verdict-binarization
effect. Neither is a grounding failure. Per-judge numbers in \cref{sec:readattr}.

\textbf{The artifact control $X^{\dagger}$ --- run for eight of nine
judges.} Records already existed for the eight original judges; VGS is
recomputed with them in \cref{sec:artifact}. The mean absolute difference from
the
$X^{+}$-based value is $0.025$ across those $32$ cells, so artifact sensitivity
is a small effect. Gemini-3.5-Flash was not covered.

\begin{figure}[t]
\centering
\kilgraphics[0.72\linewidth]{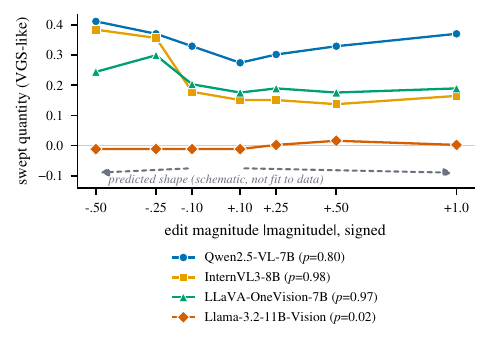}
\caption{\textbf{The magnitude sweep, and its null result.} Each
curve is the per-magnitude VGS-like quantity of \cref{sec:close}
(\cref{sec:audit}, ``What would close this, and what we ran''), for the four
open-weight judges on
ChartQA's 511 magnitude-tagged counterfactual instances. The prediction was
a monotone rise with $|{\rm magnitude}|$ on both sides of zero (shown
schematically, low and unfit to any curve); three of four judges instead
fall or stay flat, with monotone permutation $p=0.80$ to $0.98$.
Llama-3.2-11B-Vision alone reaches significance ($p=0.019$), on a curve
whose seven points span only $0.027$, consistent with the
verdict-binarization insensitivity this judge shows elsewhere in the audit
(\cref{sec:readattr}). Recomputed directly from the shipped per-instance
verdicts by \texttt{make\_figS2.py}, which asserts these exact figures at
build time and fails if they drift.}
\label{sfig:magnitude}
\end{figure}

\subsection{Blank-image ablation, and probe disagreement on ranking}
\label{sec:blank}

The probes disagree on the ordering of judges, not only cell by cell. Ranking
the nine by mean VGS and by mean GEDR gives Spearman $0.82$, but Qwen2.5-VL-7B
moves from third on VGS to seventh on GEDR, so which judge an auditor selects
depends on which probe is reported.

Averaged over four datasets the four proprietary judges change their verdict on
$0.77$ of instances when the image is blanked but only $0.57$ when it is
replaced by a mismatched one (the three original proprietary judges alone give
$0.77$ against $0.54$); the four open-weight generalist judges reverse this,
$0.24$ against $0.41$, and Claude-Sonnet-4.5 is extreme at $0.90$ against
$0.33$. VisualPRM, scored through a step-level interface, sits between the
groups at $0.73$ against $0.60$.
Reacting strongly to \emph{no} image and weakly to a \emph{wrong} one is a
response to the presence of visual input, not its content. Mean blank-ablation
$\Delta$ correlates with mean VGS at $r=0.54$ over the nine
judges, indistinguishable at $n=9$ from either $0$ or a strong correlation.
Contrasting judge behaviour with and without the image is the instrument used by
\citet{zou2026informativeness}; our ablation is why we do not treat it as a
grounding measure.

\subsection{The read-attribute experiment: where Assumption 1 is violated}
\label{sec:readattr}

The read-attribute experiment records the judge's reported attribute value
under $x$ and under $\tilde x$,
which makes $\pi_1$ and $\pi_2$ observed rather than inferred. Checking
$\gedr=\pi_1$ against that direct measurement, the largest discrepancies are on
ScienceQA: Llama-3.2-Vision at $-0.86$ and InternVL3 at $-0.50$. Both trace to
verdict binarization rather than to perception. Llama's raw score does move under
the edit, from a mean of $8.2$ to $6.5$ on ScienceQA, but it rarely crosses the
pointwise threshold at which the verdict is binarized, so $\vgs$ barely registers
a shortfall that $\gedr$ and $\pi_1$ both see clearly. Separately, on MathVista
three of four judges show an answer-polarity bias in scoring: a clean trace
concluding ``yes'' scores about $7$ to $9$ out of $10$ while an equally correct
clean trace concluding ``no'' scores only about $3$ to $5$. That makes
$\varphi_r$ depend on the trace's literal content, which \cref{ass:med} does not
allow. Both effects fall in the secondary pool.
We report them as named caveats rather than excluding the affected cells or
rescoring with a bias-controlled prompt, either of which is future work and not a
correction to what was measured.

\subsection{A held-out test of the ordering}
\label{sec:heldout}

The audit's own ordering result is retrodiction: \cref{cor:order} was
derived to explain a pattern already visible in the eight-judge (later
nine-judge) pool. The genuine fix is to predict the ordering for a judge never
touched by any prior step, before querying it, and check afterward. We did
this once. Qwen3-VL-8B-Instruct was chosen before any call was made, for a
reason recorded in writing at the time (same size class as the existing
open-weight judges, so the comparison is not confounded by scale, and it
directly answers an earlier finding in this project that Qwen3.x was not
vision-capable on OpenRouter --- it now is). The prediction, VGS $\le$ GEDR on
all four datasets under \cref{cor:order}'s side condition, was written
and timestamped before the first query, together with a commitment to report
every cell regardless of outcome.

\begin{table}[t]
\centering
\caption{\textbf{Held-out judge (Qwen3-VL-8B-Instruct).} The ordering holds
strictly on all four cells, and the edit-miss rate is non-negative throughout.}
\label{tab:heldout}
\small
\begin{tabular}{@{}l S[table-format=1.3] S[table-format=1.3] S[table-format=1.3] S[table-format=1.3] S[table-format=1.3]@{}}
\toprule
\kilth{Dataset} & {\kilth{VGS}} & {\kilth{GEDR}} & {\kilth{$\Delta_{\mathrm{null}}$}} & {\kilth{shortfall}} & {\kilth{edit-miss}} \\
\midrule
ChartQA & 0.040 & 0.210 & 0.110 & 0.170 & 0.060 \\
HallusionBench & 0.308 & 0.452 & 0.038 & 0.144 & 0.106 \\
MathVista & 0.000 & 0.375 & 0.125 & 0.375 & 0.250 \\
ScienceQA & 0.364 & 0.727 & 0.045 & 0.363 & 0.318 \\
\bottomrule
\end{tabular}
\end{table}

The ordering holds strictly on all four cells, including both primary-pool
datasets, and the edit-miss rate $\pi_1-\pi_2=(\gedr-\vgs)-\Delta_{\mathrm{null}}$
is non-negative on every cell, so the side condition the prediction depended on
is satisfied throughout, not merely the surface inequality. Per-instance
verdicts for this judge are in the Code and Data Supplement alongside the
other nine. One held-out judge is a single trial, not a validation set, and it
does not retire the retrodiction label for the audit's original ordering claim
--- but the mechanism the identity depends on has now been tested once in the
condition that actually falsifies it, and it did not fail.

%%%%%%%%%%%%%%%%%%%%%%%%%%%%%%%%%%%%%%%%%%%%%%%%%%%%%%%%%%%%%%%%%%%%%%%%%%%%%%%%
\section{Reproducibility checklist notes}
\label{sec:checklist}

Theoretical contributions: \cref{def:vgs}, \cref{lem:perc}, \cref{thm:bound},
\cref{cor:order} and \cref{cor:onesided} and \cref{prop:gap} are stated formally
in the main paper with their
assumptions in the statements, and proved in \cref{sec:proofs}; the five routine
supporting results are stated and proved in \cref{sec:support}. \Cref{ass:med}
is stated formally, and
\cref{sec:readattr} documents the two cells in which it is violated, both in the
secondary pool. \Cref{sec:sciqa} documents the ScienceQA leakage that bears on
\cref{ass:match}(a). Computational experiments: datasets are public and cited; the
construction pipeline, injected errors, image edits, judge prompts, metric
code and per-instance verdicts are in the Code and Data Supplement. The number
of instances per dataset is stated in the main paper, uncertainty is reported
for VGS and for the shortfall but not for GEDR or LEDR individually, and no
hyperparameter search was performed because the diagnostic has no trained
component.

Dataset licenses: ChartQA is GPL-3.0 (per its official repository's LICENSE
file); HallusionBench is BSD 3-Clause (per its official repository's
LICENSE.md); MathVista is CC BY-SA 4.0 for its own contributions (IQTest,
FunctionQA, Paper subsets, filtering and metadata), per its Hugging Face
dataset card, though the FigureQA-sourced subset we use is stated there to
retain the copyright and license terms of its original source rather than
MathVista's own; ScienceQA's data (as distinct from its code, which is MIT)
is CC BY-NC-SA 4.0, per its repository's LICENSE-DATA file --- non-commercial
use only; and GQA is CC BY 4.0 for its own annotations (questions, programs,
scene graphs), per its official download page, while the underlying images
and base scene graphs derive from COCO, Flickr and Visual Genome under those
sources' own separate terms, not superseded by GQA's license.

The three tables below are the full eight- and nine-judge grids
\cref{sec:stats}, \cref{sec:downstream} and \cref{sec:gedrpredict} summarize in
the text: downstream selection and
reward-groundedness correlation (\cref{stab:harm} and \cref{stab:e3}), and
the complete audit (\cref{stab:main}).

\begin{table}[t]
\centering
\caption{\textbf{Downstream selection and reward-groundedness correlation,
all eight judges, all four datasets.} The
non-clean selection rate is the rate at which the judge's argmax pick
against the true image is not the clean trace, from an independent
candidate pool built from each dataset's own ground-truth vocabulary; task
accuracy is one minus this rate by construction, so only one column is
given per dataset. The reward column is the Pearson correlation between
each judge's own main-audit pointwise score (as a reward) and ground-truth
groundedness, averaged over the four per-dataset values reported in
\cref{stab:e3}. Every judge selects a non-clean candidate at a nonzero rate on at
least one dataset; on both MathVista and ScienceQA several judges (four of
eight on ScienceQA) never do, and those nulls are reported as found. Neither
column orders judges the way mean VGS does: the reward correlation does not
significantly correlate with mean VGS (Spearman $\rho=0.62$, $p=0.10$,
$n=8$) or with mean selection rate ($\rho=-0.26$, $p=0.53$) across the
eight-judge pool.}
\label{stab:harm}
\small
\begin{tabular}{@{}l cccc c@{}}
\toprule
& \multicolumn{4}{c}{\kilth{Non-clean selection rate}} & \kilth{Reward} $\rho$\\
\cmidrule(lr){2-5}\cmidrule(l){6-6}
\kilth{Judge} & ChartQA & HallB. & MathV. & SciQA & mean\\
\midrule
LLaVA-OneVision & 0.20 & 0.27 & 0.75 & 0.05 & 0.40\\
Qwen2.5-VL-7B & 0.23 & 0.11 & 0.06 & 0.00 & 0.34\\
InternVL3-8B & 0.25 & 0.18 & 0.19 & 0.09 & 0.31\\
Llama-3.2-Vision & 0.55 & 0.37 & 0.00 & 0.00 & 0.15\\
GPT-4o & 0.20 & 0.07 & 0.38 & 0.00 & 0.32\\
Gemini-2.5-Flash & 0.19 & 0.06 & 0.31 & 0.00 & 0.33\\
Claude-Sonnet-4.5 & 0.08 & 0.13 & 0.12 & 0.23 & 0.41\\
VisualPRM-8B-v1.1 & 0.42 & 0.15 & 0.06 & 0.23 & 0.45\\
\bottomrule
\end{tabular}
\end{table}

\begin{table}[t]
\centering
\caption{\textbf{Reward-correlation analysis: reward-groundedness
correlation, per dataset.} Pearson
correlation between each judge's own main-audit pointwise score, used as a
reward,
and ground-truth groundedness (whether the trace is clean), reusing the main
audit's
scores with no new judge calls. Row means appear in \cref{stab:harm}'s Reward
column.}
\label{stab:e3}
\small
\begin{tabular}{@{}l cccc@{}}
\toprule
\kilth{Judge} & ChartQA & HallB. & MathV. & SciQA\\
\midrule
Qwen2.5-VL-7B & 0.33 & 0.31 & 0.17 & 0.54 \\
InternVL3-8B & 0.33 & 0.26 & 0.21 & 0.46 \\
LLaVA-OneVision & 0.40 & 0.22 & 0.00 & 0.47 \\
Llama-3.2-Vision & 0.02 & 0.00 & 0.00 & 0.56 \\
VisualPRM-8B-v1.1 & 0.34 & 0.45 & 0.53 & 0.46 \\
GPT-4o & 0.30 & 0.35 & 0.12 & 0.50 \\
Gemini-2.5-Flash & 0.28 & 0.36 & 0.19 & 0.48 \\
Claude-Sonnet-4.5 & 0.38 & 0.49 & 0.42 & 0.36 \\
\bottomrule
\end{tabular}
\end{table}

\begin{table*}[t]
\centering\footnotesize
\caption{\textbf{The full GroundJudge audit, all $36$ judge$\times$dataset
cells.} Instance counts are ChartQA $n{=}100$, HallusionBench $n{=}104$,
MathVista $n{=}16$, ScienceQA $n{=}22$. ChartQA and HallusionBench are the
primary pool of the main paper ($18$ cells); MathVista and ScienceQA are
secondary and carry nothing in the headline. VGS carries a $95\%$ paired
nonparametric bootstrap interval ($2{,}000$ resamples over instances).
$\Delta_{\mathrm{null}}$ is the flip rate under the truth-preserving
re-encoding $x^+$. The gap is the quantity \cref{cor:order} predicts
non-negative under its side condition; columns are rounded independently, so it
can differ from the rounded difference by $0.01$. The ablation columns are
binarized-verdict disagreement rates, not mean absolute change on the raw score.
\Cref{sec:stats} gives the full inferential detail: per-cell CIs for the
shortfall, the pooled instance-level test, and per-cell permutation tests with
Holm--Bonferroni correction.}
\label{stab:main}
\setlength{\tabcolsep}{4.5pt}\renewcommand{\arraystretch}{1.06}
\begin{tabular}{@{}ll cccc c cc@{}}
\toprule
& & \multicolumn{4}{c}{\kilth{Grounding and detection}} & \kilth{Gap} & \multicolumn{2}{c}{\kilth{Ablation} $\Delta$}\\
\cmidrule(lr){3-6}\cmidrule(lr){7-7}\cmidrule(l){8-9}
\kilth{Judge} & \kilth{Dataset} & VGS (95\% CI) & GEDR & LEDR & $\Delta_{\mathrm{null}}$ & GEDR$-$VGS & blank & mismatch\\
\midrule
\addlinespace[1pt]\multicolumn{9}{@{}l}{\itshape Open-weight judges}\\[1pt]
\quad \multirow{4}{*}{LLaVA-OneVision} & ChartQA & 0.21 \ci{0.13,0.30} & 0.65 & 0.57 & 0.03 & 0.44 & 0.19 & 0.49\\
& HallusionBench & 0.15 \ci{0.09,0.23} & 0.56 & 0.26 & 0.01 & 0.41 & 0.31 & 0.22\\
& MathVista & 0.31 \ci{0.12,0.56} & 0.50 & 0.38 & 0.00 & 0.19 & 0.19 & 0.44\\
& ScienceQA & 0.59 \ci{0.41,0.77} & 0.86 & 0.36 & 0.00 & 0.27 & 0.18 & 0.50\\
\quad \multirow{4}{*}{Qwen2.5-VL-7B} & ChartQA & 0.49 \ci{0.39,0.59} & 0.63 & 0.63 & 0.00 & 0.14 & 0.34 & 0.63\\
& HallusionBench & 0.26 \ci{0.16,0.36} & 0.54 & 0.66 & 0.04 & 0.28 & 0.42 & 0.32\\
& MathVista & 0.44 \ci{0.19,0.69} & 0.38 & 0.38 & 0.00 & $-$0.06$^{\ddagger}$ & 0.12 & 0.81\\
& ScienceQA & 0.77 \ci{0.59,0.95} & 0.95 & 0.23 & 0.00 & 0.18 & 0.27 & 0.59\\
\quad \multirow{4}{*}{InternVL3-8B} & ChartQA & 0.29 \ci{0.20,0.38} & 0.66 & 0.55 & 0.00 & 0.37 & 0.35 & 0.57\\
& HallusionBench & 0.15 \ci{0.07,0.24} & 0.58 & 0.47 & 0.04 & 0.43 & 0.47 & 0.33\\
& MathVista & 0.06 \ci{0.00,0.19} & 0.56 & 0.75 & 0.12 & 0.50 & 0.56 & 0.81\\
& ScienceQA & 0.32 \ci{0.14,0.50} & 0.77 & 0.64 & 0.00 & 0.45 & 0.23 & 0.45\\
\quad \multirow{4}{*}{Llama-3.2-Vision} & ChartQA & $-$0.05 \ci{$-$0.12,0.01} & 0.51 & 0.27 & 0.08 & 0.56 & 0.07 & 0.26\\
& HallusionBench & 0.02 \ci{$-$0.02,0.06} & 0.41 & 0.12 & 0.05 & 0.39 & 0.08 & 0.07\\
& MathVista & 0.00 \ci{0.00,0.00} & 0.00 & 0.00 & 0.00 & 0.00 & 0.00 & 0.00\\
& ScienceQA & 0.00 \ci{0.00,0.00} & 1.00 & 0.23 & 0.00 & 1.00 & 0.00 & 0.00\\
\addlinespace[1pt]\multicolumn{9}{@{}l}{\itshape Proprietary judges}\\[1pt]
\quad \multirow{4}{*}{GPT-4o} & ChartQA & 0.27 \ci{0.16,0.38} & 0.58 & 0.47 & 0.08 & 0.31 & 0.46 & 0.62\\
& HallusionBench & 0.38 \ci{0.28,0.47} & 0.63 & 0.13 & 0.04 & 0.25 & 0.35 & 0.42\\
& MathVista & 0.12 \ci{$-$0.19,0.44} & 0.44 & 0.50 & 0.25 & 0.32 & 0.62 & 0.62\\
& ScienceQA & 0.95 \ci{0.86,1.00} & 0.95 & 0.14 & 0.00 & 0.00 & 0.95 & 1.00\\
\quad \multirow{4}{*}{Gemini-2.5-Flash} & ChartQA & 0.38 \ci{0.26,0.49} & 0.48 & 0.71 & 0.12 & 0.10 & 0.74 & 0.72\\
& HallusionBench & 0.30 \ci{0.19,0.40} & 0.62 & 0.37 & 0.04 & 0.32 & 0.77 & 0.30\\
& MathVista & 0.25 \ci{0.06,0.50} & 0.50 & 0.62 & 0.06 & 0.25 & 0.69 & 0.62\\
& ScienceQA & 0.91 \ci{0.77,1.00} & 1.00 & 0.09 & 0.00 & 0.09 & 1.00 & 0.91\\
\quad \multirow{4}{*}{Claude-Sonnet-4.5} & ChartQA & 0.04 \ci{$-$0.01,0.10} & 0.21 & 0.82 & 0.04 & 0.17 & 0.95 & 0.35\\
& HallusionBench & 0.19 \ci{0.12,0.27} & 0.71 & 0.67 & 0.06 & 0.52 & 0.84 & 0.34\\
& MathVista & 0.00 \ci{$-$0.19,0.19} & 0.12 & 0.88 & 0.06 & 0.12 & 0.81 & 0.06\\
& ScienceQA & 0.55 \ci{0.36,0.73} & 0.95 & 0.14 & 0.00 & 0.40 & 1.00 & 0.55\\
\addlinespace[1pt]\multicolumn{9}{@{}l}{\itshape Trained process reward model}\\[1pt]
\quad \multirow{4}{*}{VisualPRM-8B-v1.1} & ChartQA & 0.54 \ci{0.44,0.64} & 0.69 & 0.58 & 0.00 & 0.15 & 0.61 & 0.59\\
& HallusionBench & 0.44 \ci{0.35,0.55} & 0.89 & 0.83 & 0.04 & 0.45 & 0.64 & 0.29\\
& MathVista & 0.69 \ci{0.44,0.88} & 0.94 & 0.94 & 0.00 & 0.25 & 0.94 & 0.75\\
& ScienceQA & 0.59 \ci{0.32,0.86} & 0.86 & 0.77 & 0.09 & 0.27 & 0.73 & 0.77\\
\addlinespace[1pt]\multicolumn{9}{@{}l}{\itshape Current-generation frontier judge (B10, added 2026-08-05)}\\[1pt]
\quad \multirow{4}{*}{Gemini-3.5-Flash} & ChartQA & 0.61 \ci{0.51,0.70} & 0.72 & 0.55 & 0.00 & 0.11 & 0.62 & 0.62\\
& HallusionBench & 0.55 \ci{0.45,0.63} & 0.94 & 0.07 & 0.00 & 0.39 & 0.76 & 0.30\\
& MathVista & 0.62 \ci{0.31,0.88} & 0.81 & 0.44 & 0.12 & 0.19 & 0.81 & 0.62\\
& ScienceQA & 1.00 \ci{1.00,1.00}$^{\P}$ & 1.00 & 0.14 & 0.00 & 0.00 & 1.00 & 1.00\\
\bottomrule
\end{tabular}
\tabnote{$^{\ddagger}$The single cell that violates the ordering.
$^{\P}$Gemini-3.5-Flash's $\vgs=\gedr=1.00$ is a genuine perfect-detection
ceiling, confirmed from per-instance verdicts showing varying scores rather than
assumed from the point estimate. Contrast Llama-3.2-Vision on MathVista, which
returns an identical score for every condition on all $16$ instances: that row is
$0$ by degeneracy, not by image-blindness.}
\end{table*}

\FloatBarrier

\end{document}